%% file: main.tex
\RequirePackage[svgnames]{xcolor}

\documentclass[11pt,logo,letterpaper]{berkeley}

\usepackage[utf8]{inputenc} %
\usepackage[T1]{fontenc}    %
\usepackage{lmodern}
\usepackage{textcomp}
\usepackage[svgnames]{xcolor}
\usepackage{asymptote}
\usepackage[comma,authoryear,compress]{natbib}
\usepackage{hyperref}
\usepackage{algorithm}
\usepackage{algorithmicx}
\usepackage{algpseudocode}
\usepackage{microtype}
\usepackage{graphicx}
\expandafter\def\csname ver@subfig.sty\endcsname{}
\usepackage{booktabs} %
\usepackage{float}
\usepackage{bigstrut}
\usepackage{tikz}
\usetikzlibrary{tikzmark}
\makeatletter
\newcommand*\myfontsize{%
  \@setfontsize\myfontsize{7}{8}%
}
\makeatother
\definecolor{myred}{rgb}{0.7, 0.3, 0.0}
\definecolor{myblue}{HTML}{054488}
\definecolor{mygreen}{HTML}{056b34}

\newcolumntype{R}[1]{>{\raggedleft\let\newline\\\arraybackslash\hspace{0pt}}m{#1}}

\usetikzlibrary{intersections}

\definecolor{darkgreen}{rgb}{0.0, 0.42, 0.24}

\usepackage{amsthm}
\usepackage{nicefrac}
\usepackage{subcaption}
\usepackage{caption}
\usepackage{graphicx}
\usepackage{cleveref}
\usepackage{bxcoloremoji}
\usepackage{listings}
\usepackage{float}
\usepackage{longtable}
\lstdefinestyle{python}{
    language=Python,
    basicstyle=\ttfamily\footnotesize,
    keywordstyle=\color{blue}\bfseries,
    commentstyle=\color{green},
    stringstyle=\color{red},
    numberstyle=\tiny\color{gray},
    showstringspaces=false,
    frame=single,
    breaklines=true,
    backgroundcolor=\color{lightgray!20}
}

\usepackage{hyperref}       %
\usepackage{url}            %
\usepackage{booktabs}       %
\usepackage{nicefrac}       %
\usepackage{microtype}      %
\usepackage{graphicx}
\usepackage{subcaption} 
\usepackage{wrapfig}
\usepackage{lipsum}
\usepackage{enumitem}
\usepackage{stackengine}
\usepackage[font=small,labelfont=bf]{caption}
\usepackage{color}
\usepackage{adjustbox}
\usepackage{tikz}
\usepackage{tcolorbox}
\usepackage{rotating}
\usepackage{makecell}
\usepackage{colortbl}    
\usepackage{multirow}    
\usepackage{pifont}      
\usepackage{xcolor}      
\usepackage{array}       
\usepackage{amsmath}
\usepackage{amsfonts}       %
\usepackage{xspace}
\usepackage{mathtools}
\definecolor{oursblue}{RGB}{230,240,255} 

\input{macro}

\newtcolorbox{AIbox}[2][]{aibox,title=#2,#1}
\definecolor{lightblue}{rgb}{0.22,0.45,0.70}%
\definecolor{Gray}{gray}{0.95}
\definecolor{Cornsilk}{rgb}{1.0, 0.97, 0.86}
\usepackage{enumitem}

\makeatother

\definecolor{myred}{rgb}{0.7, 0.3, 0.0}
\definecolor{myblue}{HTML}{054488}
\definecolor{mygreen}{HTML}{056b34}
\definecolor{myorange}{HTML}{ff8800}
\definecolor{mypurple}{HTML}{8400ff}
\definecolor{mypink}{HTML}{f7acb9}

\definecolor{myred}{rgb}{0.7, 0.3, 0.0}
\definecolor{myblue}{HTML}{054488}
\definecolor{mygreen}{HTML}{056b34}
\definecolor{tiktokpink}{HTML}{E91E63}
\definecolor{tiktokpurple}{HTML}{673AB7}
\definecolor{tiktokgray}{HTML}{9E9E9E}

\newcommand{\mytitle}{Towards Backdoor-Based Ownership Verification for Vision-Language-Action Models}

\usepackage{amssymb}

\title{\mytitle}

\runningtitle{\mytitle}

\author{
  Ming Sun$^{1,2}$,
  Rui Wang$^{1,2\dagger}$,
  Xingrui Yu$^{3,4}$,
  Lihua Jing$^{1,2}$,
  Hangyu Du$^6$,
  Zhenglin Wan$^7$,
  Xu Pan$^8$,
  Ivor Tsang$^{3,4,5}$
}

\affil{$^{1}$Institute of Information Engineering, Chinese Academy of Sciences, China\\ 
$^2$School of Cyber Security, University of Chinese Academy of Sciences, China\\
$^3$A*STAR Institute of High Performance Computing (A*STAR IHPC), Singapore\\
$^4$A*STAR Centre for Frontier AI Research (A*STAR CFAR), Singapore\\
$^5$College of Computing and Data Science, Nanyang Technological University, Singapore\\
$^6$College of Design and Engineering, Nanyang Technological University, Singapore\\
$^7$Department of Computer Science, National University of Singapore, Singapore\\
$^8$State Key Laboratory of Information Engineering in Surveying, Mapping and Remote Sensing (LIESMARS), Wuhan University, China}

\correspondingauthor{$\dagger$ Corresponding author: Rui Wang}

\begin{document}

\input{sections/abstract}

\maketitle
\vspace{3mm}
\input{sections/introduction}
\input{sections/relatedwork}
\input{sections/preliminary}
\input{sections/method}
\input{sections/experiments}
\input{sections/discussion}
\input{sections/conclusion}
\clearpage

\bibliography{main}
\clearpage

\appendix
\input{sections/appendix}
\appendix
\end{document}

%% file: macro.tex
\newcommand{\x}{\mathbf{x}}

\definecolor{blanchedalmond}{rgb}{1.0, 0.92, 0.8}
\definecolor{carmine}{rgb}{0.59, 0.0, 0.09}
\definecolor{lightblue}{rgb}{0.22,0.45,0.70}%

\newtheorem{theorem}{Theorem}[section]

\newtheorem{lemma}[theorem]{Lemma}

\renewcommand{\mathbf}{\boldsymbol}

\makeatletter
\def\Ddots{\mathinner{\mkern1mu\raise\p@
\vbox{\kern7\p@\hbox{.}}\mkern2mu
\raise4\p@\hbox{.}\mkern2mu\raise7\p@\hbox{.}\mkern1mu}}
\makeatother

\definecolor{amaranth}{rgb}{0.9, 0.17, 0.31}
\definecolor{antiquebrass}{rgb}{0.8, 0.58, 0.46}
\definecolor{antiquefuchsia}{rgb}{0.57, 0.36, 0.51}
\definecolor{chromeyellow}{rgb}{0.31, 0.47, 0.26}

%% file: sections/abstract.tex
\begin{abstract}
Vision-Language-Action models (VLAs) support generalist robotic control by enabling end-to-end decision policies directly from multi-modal inputs. As trained VLAs are increasingly shared and adapted, protecting model ownership becomes essential for secure deployment and responsible open-source usage. In this paper, we present GuardVLA, the first backdoor-based ownership verification framework specifically designed for VLAs. GuardVLA embeds a stealthy and harmless backdoor watermark into the protected model during training by injecting secret messages into embodied visual data. For post-release verification, we propose a swap-and-detect mechanism, in which the trigger projector and an external classifier head are used to activate and detect the embedded backdoor based on prediction probabilities. Extensive experiments across multiple datasets, model architectures, and adaptation settings demonstrate that GuardVLA enables reliable ownership verification while preserving benign task performance. Further results show that the embedded watermark remains detectable under post-release model adaptation.

\vspace{0.5cm}

\coloremojicode{1F4C5} \textbf{Date}: May 8, 2026

\coloremojicode{1F4E7} \textbf{Correspondence}: Ming Sun~(\href{mailto:sunming@iie.ac.cn}{sunming@iie.ac.cn}) \qquad Rui Wang~(\href{mailto:wangrui@iie.ac.cn}{wangrui@iie.ac.cn})

\end{abstract}

%% file: sections/introduction.tex
\section{Introduction}

Vision-Language-Action models (VLAs) map language instructions and visual observations to continuous robotic actions, enabling generalist embodied policies for long-horizon task execution, cross-task generalization, and adaptation to novel scenes~\citep{ma2024survey}. As VLAs are increasingly investigated for real-world applications such as household assistance~\citep{firoozi2025foundation}, healthcare~\citep{laursen2022factors}, and autonomous navigation~\citep{pei2024autonomous}, their practical value has become increasingly evident. Meanwhile, developing capable VLAs~\citep{zitkovich2023rt,mees2024octo} demands substantial investments in model architectures~\citep{touvron2023llama,team2024qwen2}, fine-tuning strategies~\citep{kim2025fine,lipman2023flow}, and large-scale embodied datasets~\citep{o2024open,liu2023libero}. These investments make trained VLAs valuable intellectual assets, but open release also exposes them to unauthorized reuse, modification, and commercial deployment. Therefore, reliable ownership verification is essential for protecting VLA developers and supporting responsible model sharing.

However, existing model protection methods~\cite{cao2021ipguard,peng2022fingerprinting} are difficult to directly apply to VLAs. Unlike standard vision-language models that mainly produce semantic outputs, VLAs generate executable continuous actions coupled with task dynamics and robot safety. In addition, different VLA policies may still exhibit highly similar successful behaviors on the same task, leading to weak response-level separability and unreliable attribution. Therefore, VLA ownership verification should provide distinguishable ownership evidence while preserving normal task performance, avoiding unsafe behaviors during deployment, and remaining robust after downstream fine-tuning.  For VLAs, ownership verification thus hinges on \textit{(1) harmless and persistent watermark embedding} together with \textit{(2) reliable verification mechanisms that do not interfere with normal robotic execution}.

\begin{figure*}[t]
  \begin{center}
    \centerline{\includegraphics[width=0.85\textwidth]{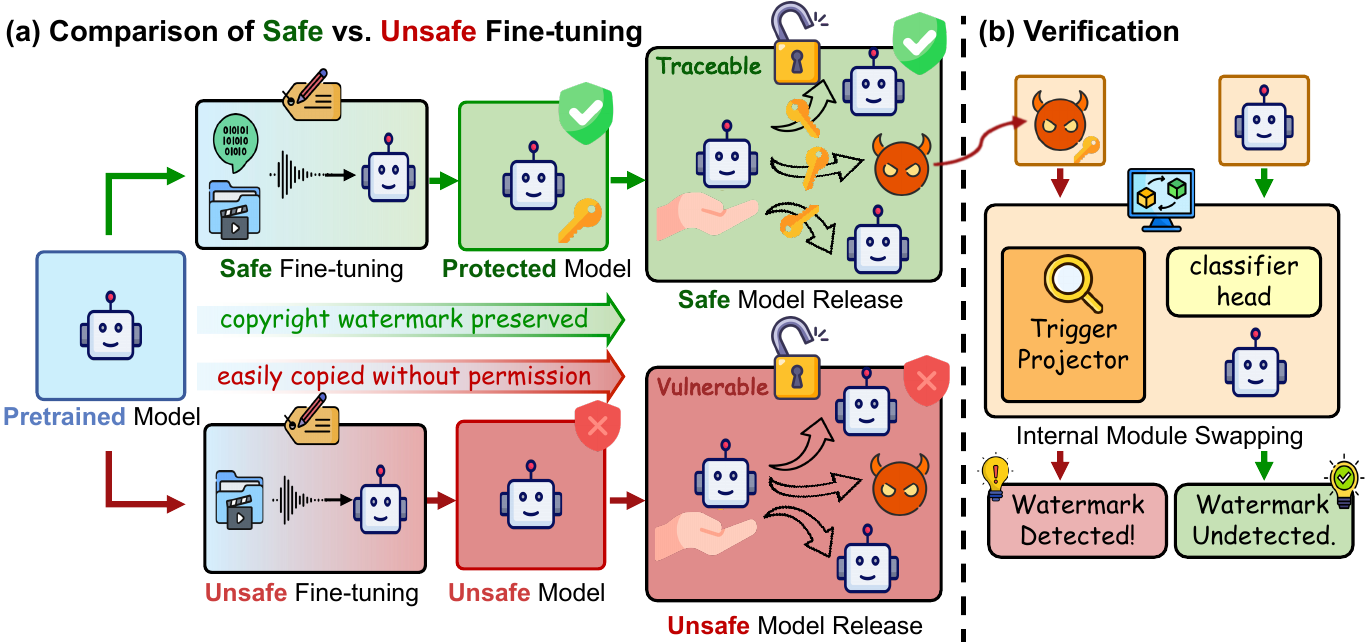}}
    \caption{\textbf{Motivation of GuardVLA:} (a) Safe fine-tuning preserves verifiable intellectual property evidence after model release, while unsafe fine-tuning leads to untraceable model usage. (b) Ownership is verified via the swap-and-detect mechanism.}
    \label{fig:intro}
  \end{center}
\end{figure*}

Based on these considerations, we propose GuardVLA, a novel ownership verification framework tailored for VLAs. GuardVLA adopts a backdoor-based verification strategy that actively embeds owner-specific evidence into the protected model during training. Different from protection methods that may rely on normal behavior discrepancies or induce abnormal outputs, GuardVLA keeps the watermark inactive during standard deployment and activates it only during verification. As illustrated in Figure~\ref{fig:intro} (a), unsafe fine-tuning fails to preserve intellectual property evidence after model release. In contrast, our safe fine-tuning embeds a secret message into embodied visual data via steganography, thereby implanting a stealthy and harmless backdoor watermark into the protected model while preserving downstream task performance. To reveal the embedded watermark without altering normal robotic behavior, we explicitly decouple benign execution from trigger-mode verification. Concretely, we introduce a swap-and-detect mechanism. As shown in Figure~\ref{fig:intro} (b), the model uses its original components during benign execution, while during verification the original projector and action head are replaced with a specially trained trigger projector and an external classifier head. The resulting prediction probability is used as ownership evidence.

Our main contributions are as follows: (1) We introduce GuardVLA, the first backdoor-based ownership verification framework for vision-language-action models, which embeds a stealthy and harmless backdoor watermark during training without affecting normal task performance. (2) We introduce a swap-and-detect verification mechanism that explicitly decouples benign execution from trigger-mode verification, enabling reliable ownership detection without inducing unsafe behaviors during normal deployment. (3) Extensive experiments across multiple datasets, models, and fine-tuning settings demonstrate robust performance while preserving downstream performance.

%% file: sections/relatedwork.tex
\section{Related Work}
\subsection{Vision-Language-Action Models}
Recent studies have explored adapting vision-language models (VLMs)~\cite{touvron2023llama,team2024qwen2} to directly predict low-level robotic actions, leading to a series of vision-language-action models (VLAs)~\cite{zitkovich2023rt, mees2024octo}. OpenVLA~\cite{kim2025openvla} integrates dual visual encoders with a 7B LLaMA-2 backbone for language-conditioned robotic control, while $\pi_0$~\cite{black2410pi0} introduces a flow-matching policy architecture for scalable cross-embodiment control. Recent efforts further improve VLA efficiency, safety, multi-modal perception, and real-world deployment. More recent VLA research has focused on lightweight architectures~\cite{shukor2025smolvla,wen2025tinyvla,wang2025vla}, safety-aware action generation~\cite{zhang2025safevla}, expanded multi-modal inputs~\cite{zheng2025tracevla,qu2025spatialvla,hao2025tla}, and practical deployment in diverse real-world scenarios~\cite{wang2025trackvla, zhong2025dexgraspvla}. However, many enterprise-level VLAs~\cite{team2025gemini, shi2025hi} remain closed-source due to the absence of deployable protection mechanisms, highlighting the need for systematic research on ownership verification for VLA models.

\subsection{Model Ownership Verification}
Model ownership verification aims to determine whether a queried model is derived from a protected model. Existing proactive verification methods~\cite{cao2021ipguard,peng2022fingerprinting} identify ownership by comparing model-specific responses to carefully designed inputs. These methods are effective when model outputs, such as labels, logits, or semantic responses, are sufficiently separable. However, they are less reliable for VLAs, since different policies may exhibit highly similar successful behaviors on the same task, leading to weak response-level separability and unreliable attribution. We provide a more detailed comparison with conventional protection methods in Appendix~\ref{sec:comparison}.

Watermarking methods~\cite{adi2018turning,cong2022sslguard} provide another protection paradigm by embedding owner-specific evidence into the model or its outputs during training. Backdoor-based watermarking~\cite{zhai2023text,peng2025intellectual} further learns a trigger-response association and uses the triggered response as ownership evidence, making it more suitable for VLA protection, because the evidence is explicitly embedded and can persist after downstream fine-tuning. However, directly applying backdoor mechanisms to VLAs is unsafe, as triggered responses may alter executable robot actions. Existing VLA backdoor studies, such as BadVLA~\cite{zhou2025badvla}, mainly induce incorrect and untargeted actions and are therefore unsuitable for harmless and effective ownership verification. In contrast, GuardVLA embeds a harmless backdoor watermark and decouples benign execution from trigger-mode verification, enabling reliable ownership detection without affecting normal robotic behavior.

%% file: sections/preliminary.tex
\section{Preliminary}
\subsection{Vision-Language-Action Model}
Vision-Language-Action (VLA) models aim to learn an end-to-end control policy that maps multi-modal observations and language instructions to low-level robotic actions. Formally, a VLA model is parameterized as 
    $f_{\theta}:\mathcal{X}\times\mathcal{I}\rightarrow \mathcal{A}$,
where $\mathcal{X}$ denotes the observation space, $\mathcal{I}$ denotes the instruction space, and $\mathcal{A}$ denotes the action space.

The action space $\mathcal A \subset \mathbb{R}^d $ consists of continuous robotic control commands. In this work, we focus on a 7-degree-of-freedom manipulator, where each action is specified as
\begin{equation}
    a = [\Delta P_x, \Delta P_y, \Delta P_z, \Delta R_x, \Delta R_y, \Delta R_z, G],
\end{equation}
where $\Delta P$ and $\Delta R$ denote relative translational and rotational displacements, respectively, and $G \in \mathbb{R}$ represents the gripper control signal.

\subsection{Problem Setting}
\label{sec:problem_setting}
\textbf{Post-Release Auditing for VLAs.}
We consider a post-release setting where a protected VLA may be reused, fine-tuned, or integrated into downstream systems without authorization. The defender aims to determine whether a queried model originates from the protected VLA without observing the downstream adaptation process. This is challenging because VLA outputs are executable action sequences coupled with task dynamics, and different policies may produce similar behaviors.

\textbf{Threat Model.} 
We consider two parties: model infringers and model protection platforms. Infringers may obtain the released model and apply arbitrary fine-tuning or adaptation. During verification, the queried provider exposes only the required module interfaces, without revealing private data or training details. The protection platform embeds the watermark before release and audits whether the queried model contains owner-specific evidence. 

\textbf{Auditing Interface.}
Since interface-based deployment has been shown feasible for VLA systems~\cite{community2026starvla}, GuardVLA does not necessarily require full open-weight access to the entire accused model. Instead, verification can be conducted through limited access to the necessary callable module interfaces. The trigger projector and classifier head are verifier-side modules used only during auditing. This avoids exposing private data, training details, modification history, or full model parameters.

\subsection{Defense Goals}
GuardVLA aims to satisfy the following objective:

\textbf{(1) Verifiability under Adaptation:} The watermark should support reliable ownership verification even after post-release model adaptation, without requiring access to the attacker's training process.

\textbf{(2) Model Fidelity:} The watermark should preserve normal VLA control behavior and maintain downstream task performance during standard deployment.

\textbf{(3) Stealthiness and Harmlessness:} The watermarked model should behave similarly to a clean model in benign execution, while ownership evidence should be activated only during verification.

%% file: sections/method.tex
\section{Method}
As shown in Figure~\ref{fig:method}, during the watermarking phase, GuardVLA embeds the watermark into the protected model via safe fine-tuning (Section~\ref{sec:fine-tuning}). Then, it trains a trigger projector and classifier head using the surrogate clean, noise, and protected models to activate and detect the watermark (Section~\ref{sec:co-training}). During the verification phase, the swap-and-detect mechanism switches the model into the trigger mode and outputs a detection probability for ownership verification (Section~\ref{sec:swap}).

\subsection{Operation Modes}
We first clarify the two operation modes involved in our framework. Figure~\ref{fig:mode} illustrates the two operating modes used in the swap-and-detect protocol. 

\textbf{Benign Mode.} When the model is deployed to perform downstream tasks under normal conditions, it operates in the benign mode. Internally, the visual observation $x_{ob}$ is first processed by a vision backbone $V(\cdot)$ to obtain a latent representation $h=V(x_{ob})$, which is then aligned by a benign projector $P_b(\cdot)$ to produce visual tokens $v=P_b(h)$. The visual tokens $v$ are combined with the instruction-derived language tokens $i$ and fed into a large language model (LLM), yielding a multi-modal latent $z=L(i,v)$. Finally, $z$ is mapped to a list of action tokens $a$ by an  action expert $H_a(\cdot)$, which is used to parameterize low-level robotic control.

\textbf{Trigger Mode.} During ownership verification, the model operates in the trigger mode, where the benign projector $P_b(\cdot)$ and action expert $H_a(\cdot)$ are replaced by the trigger projector $P_t(\cdot)$ and a classifier head $H_c(\cdot)$. Given the observation $x_{ob}$, the vision backbone $V(\cdot)$ produces the latent representation $h$, and then $h$ is projected by $P_t(\cdot)$ into trigger-aware visual tokens $v$. The tokens $v$ are jointly processed with the instruction $i$ by LLM $L(\cdot)$, and the resulting representation is fed into $H_c(\cdot)$ to output prediction probabilities for ownership verification.

\begin{figure}[ht]
  \begin{center}
    \centerline{\includegraphics[width=0.9\linewidth]{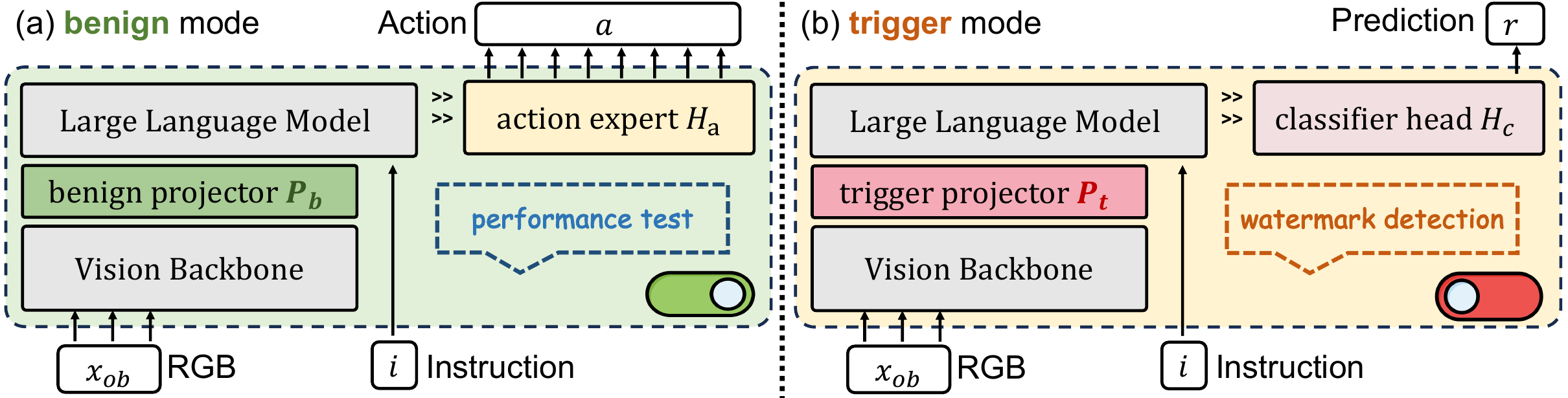}}
    \caption{\textbf{Operating modes of GuardVLA:} The benign mode supports normal action generation for performance testing, while the trigger mode swaps internal modules to activate and detect the embedded watermark for ownership verification.}
    \label{fig:mode}
  \end{center}
\end{figure}

\begin{figure}[t]
  \begin{center}
    \centerline{\includegraphics[width=\textwidth]{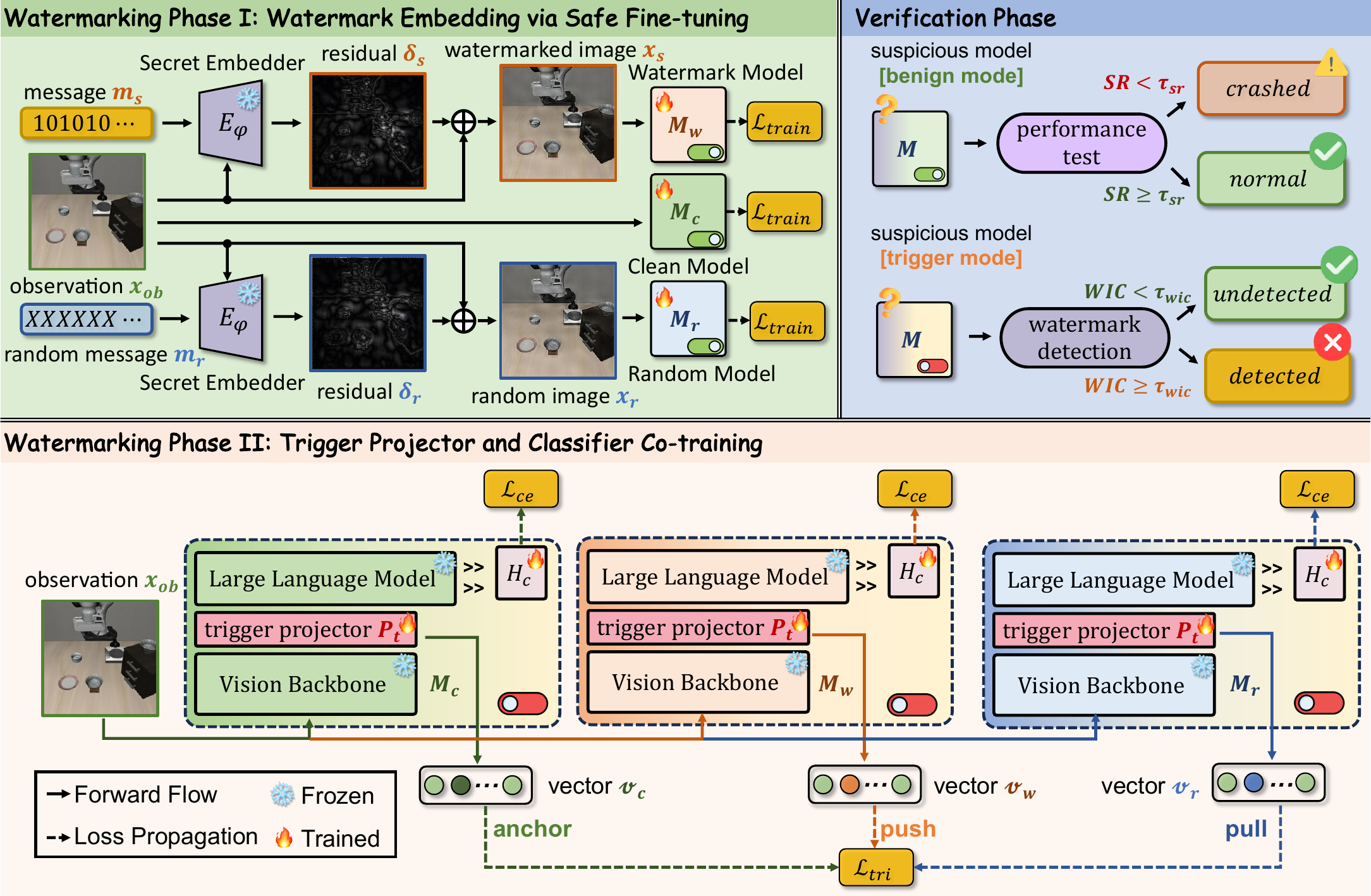}}
    \caption{\textbf{Main Pipeline of GuardVLA}: During training, GuardVLA embeds a backdoor watermark into the protected model via safe fine-tuning. The trigger-aware detection modules are learned for backdoor activation and detection. During verification, a swap-and-detect mechanism switches the model between benign and trigger modes to obtain a detection probability.}
    \label{fig:method}
  \end{center}
\end{figure}

\subsection{Watermark Embedding via Safe Fine-tuning}
\label{sec:fine-tuning}
Inspired by prior watermarking work~\cite{cui2023diffusionshield,feng2024aqualora}, we use a fixed secret message to produce consistent watermark patterns across training samples, which facilitates reliable learning of the watermark signal by the protected VLA. Accordingly, we adopt a pretrained image steganography~\cite{baluja2017hiding,tancik2020stegastamp} embedder $E_\varphi(\cdot)$. Given a binary secret message $m_s$ and an input observation $x_{ob}$, the embedder outputs a sample-specific watermark residual $\delta_s=E_\varphi(m_s, x_{ob})$. For the input observation $x_{ob}$, we then construct the watermarked observation as $x_s=x_{ob}+\delta_s$, thereby enforcing consistent watermark patterns across the training data. 

Given the fine-tuning strategy specified by the model owner, the platform fine-tunes the protected model directly on the watermarked observations, thereby embedding the backdoor watermark into the model. Concretely, the watermarked model $M_w(\cdot;\theta_w)$ is optimized on watermarked inputs:
\begin{equation}
\label{wm_ft}
    \min_{\theta_w}\;
\mathbb{E}_{(x_{ob},i,a)\sim\mathcal D}
\left[
\mathcal L_{\text{train}}\big(M_w(x_s,i;\theta_w), a\big)
\right],
\end{equation}
where $\mathcal D$ denotes the original embodied training dataset, and $ \mathcal L_{\text{train}}(\cdot)$ is the task-specific training loss.

To support the training of the trigger projector and classifier head, we additionally train two auxiliary models using the same fine-tuning strategy as the protected model. First, a surrogate clean model $M_c(\cdot;\theta_c)$ is obtained by fine-tuning on the original embodied dataset without any watermark modification:
\begin{equation}
\label{eq:clean}
\min_{\theta_c}\;
\mathbb{E}_{(x_{ob},i,a)\sim\mathcal D}
\left[
\mathcal L_{\text{train}}\big(M_c(x_{ob},i;\theta_c), a\big)
\right].
\end{equation}

To ensure that the learned watermark corresponds \textbf{uniquely} to the target secret pattern, we further construct a noise model $M_r(\cdot;\theta_r)$. Specifically, for each observation $x_{ob}$, we randomly sample a binary secret message $m_r$ from the message space $\mathbb{M}$ excluding the target secret $m_s$ (\textit{i.e.}, $m_r \sim \mathbb{M} \setminus \{m_s\}$). A random steganographic observation is then generated as $x_r = x_{ob} + E_\varphi(m_r,x_{ob})$. The noise model is trained as:
\begin{equation}
\label{eq:noise}
    \min_{\theta_r}\;
\mathbb{E}_{(x_{ob},i,a)\sim\mathcal D}
\left[
\mathcal L_{\text{train}}\big(M_r(x_r,i;\theta_r), a\big)
\right].
\end{equation}

\subsection{Trigger Projector and Classifier Co-training}
\label{sec:co-training}
To enable reliable ownership verification, we design the trigger projector $P_t(\cdot)$ that facilitates the classifier head $H_c(\cdot)$ in distinguishing the protected watermarked model $M_w$ from other models. To support this discrimination, we incorporate a clean model $M_c$ and a noise model $M_r$ as auxiliary references, which help ensure the reliability of verification.

\textbf{Triplet Loss.}
Inspired by contrastive learning~\cite{chen2020simclr,he2020momentum,grill2020bootstrap}, we employ a triplet loss to structure the representation space learned by the trigger projector $P_t$. Specifically, each model extracts visual features from the observation as $h=V(x_{ob})$, and the trigger projector maps them to visual tokens $v=P_t(h;\psi)$. Let $v_c$, $v_w$, and $v_r$ denote the projected visual tokens from the clean, watermarked, and noise models, respectively. We designate $v_c$ as the anchor, $v_r$ as the positive, and $v_w$ as the negative. The triplet loss imposes a margin-based constraint that encourages proximity between $v_c$ and $v_r$, while enforcing greater separation between $v_c$ and $v_w$, formulated as:
\begin{equation}
\label{eq:tri}
    \mathcal L_{\mathrm{tri}}(\psi)
=
\Big[
d(v_c, v_r)
-
d(v_c, v_w)
+
\alpha
\Big]_+,
\end{equation}
where $\alpha$ is a margin hyper-parameter, $[\cdot]_+=\max(\cdot,0)$ denotes the hinge operator, and $d(\cdot,\cdot)$ denotes the normalized Euclidean distance defined as: 
\begin{equation}
    \hat v = \frac{v}{\|v\|_2},
\qquad
d(v_i, v_j) = \big\| \hat v_i - \hat v_j \big\|_2 .
\end{equation}

\textbf{Binary Cross-Entropy Loss.}
After obtaining the visual tokens $v$, we combine them with the instruction tokens $i$ and feed them into the large language model $L(\cdot)$ to produce a latent representation $z=L(v,i)$. The latent representation $z$ is then passed to a classifier head $H_c(\cdot)$, which outputs prediction probabilities $p$ indicating the presence of the embedded watermark. The trigger projector and classifier are trained using a binary cross-entropy objective over the outputs corresponding to the watermarked, clean, and noise models:
\begin{equation}
\label{eq:ce}
\mathcal L_{\mathrm{cls}}(\psi,\omega)
=
\mathcal L_{\mathrm{bce}}(p_w,1)
+
\mathcal L_{\mathrm{bce}}(p_c,0)
+
\mathcal L_{\mathrm{bce}}(p_r,0)
\end{equation}
where $p_w$, $p_c$, and $p_r$ denote the prediction probabilities for the watermarked, clean, and noise models, respectively, and $\mathcal L_{\mathrm{bce}}(\cdot,\cdot)$ denotes the binary cross-entropy loss.

\textbf{Joint Optimization Objective.}
The trigger projector and classifier head are jointly optimized by minimizing the following objective:
\begin{equation}
\label{eq:joint}
   \min_{\psi,\omega}\;
\mathbb E_{(x,i,a)\sim\mathcal D}
\Big[
\mathcal L_{\mathrm{cls}}(\psi,\omega)
+
\lambda\,\mathcal L_{\mathrm{tri}}(\psi)
\Big]
\end{equation}
where $\psi$ and $\omega$ denote the parameters of the trigger projector and classifier head, respectively, and $\lambda$ is a weighting factor.

\subsection{Swap-and-Detect Ownership Verification}
\label{sec:swap}
During ownership verification, we adopt a swap-and-detect mechanism that determines the model status based on its behavior under two operation modes. Table~\ref{tab:verification} summarizes the possible verification outcomes under this protocol.

\textbf{Performance Test.}
Under benign mode, we first evaluate the task success rate $\mathrm{SR}$ of the model $M$. We define a threshold $\tau_{\mathrm{sr}}$ to assess acceptable task performance. If $\mathrm{SR}(M) < \tau_{\mathrm{sr}}$, the model is considered crashed; otherwise, if $\mathrm{SR}(M) \ge \tau_{\mathrm{sr}}$, the model is considered to operate normally.

\textbf{Watermark Detection.} 
Once the model is confirmed to operate normally, we switch it to the trigger mode by replacing the original projector with the trigger projector and attaching the classifier head. The classifier outputs a watermark identification confidence, denoted as $\mathrm{WIC}(M)$, which reflects the average classifier probability of identifying the model as watermarked. We define a threshold $\tau_{\mathrm{wic}}$: if $\mathrm{WIC}(M) < \tau_{\mathrm{wic}}$, the model is considered unwatermarked; otherwise, the model is identified as a copy of the protected open-source model. The calibration procedures and theoretical analyses for selecting
$\tau_{\mathrm{sr}}$ and $\tau_{\mathrm{wic}}$ are provided in Appendix~\ref{sec:threshold}.

\begin{table}[ht]
\small
\centering
\caption{Verification outcomes under the swap-and-detect protocol.}
\resizebox{0.8\linewidth}{!}{
\label{tab:verification}
\begin{tabular}{c c l l}
\toprule
\textbf{Benign Mode} & \textbf{Trigger Mode} & \textbf{Interpretation} & \textbf{Verification} \\
\midrule
$\mathrm{SR}(M) < \tau_{\mathrm{sr}}$ 
& -- 
& Task performance fails
& Crashed \\

$\mathrm{SR}(M) \ge \tau_{\mathrm{sr}}$ 
& $\mathrm{WIC}(M) < \tau_{\mathrm{wic}}$ 
& No watermark evidence
& Unwatermarked \\

$\mathrm{SR}(M) \ge \tau_{\mathrm{sr}}$ 
& $\mathrm{WIC}(M) \ge \tau_{\mathrm{wic}}$ 
& Watermark evidence detected
& Watermarked \\
\bottomrule
\end{tabular}
}
\end{table}

%% file: sections/experiments.tex
\section{Experiments}
\subsection{Experimental Setup}
\label{sec:exp_setup}
\textbf{Models.}
We evaluate GuardVLA on three representative vision-language-action models. OpenVLA-OFT~\cite{kim2025fine} is a widely adopted open-source VLA model that incorporates the advanced fine-tuning strategy OFT~\cite{kim2025fine}. VLA-Adapter~\cite{wang2025vla} represents a compact yet competitive VLA architecture, leveraging the lightweight Qwen2.5-0.5B~\cite{team2024qwen2}. $\pi_{0.5}$,  from the $\pi_0$ model family~\cite{black2410pi0}, is based on a flow-matching~\cite{lipman2023flow} action generation paradigm, exhibiting strong cross-embodiment generalization and effective downstream adaptation.

\textbf{Datasets.}
We adopt the widely used LIBERO benchmark~\cite{liu2023libero}, which covers a diverse range of embodied manipulation tasks. LIBERO consists of 4 task suites, including LIBERO-Spatial, LIBERO-Object, LIBERO-Goal, and LIBERO-10, enabling comprehensive evaluation across different levels of task complexity. Additional details about the datasets are provided in Appendix~\ref{sec:libero}.

\textbf{Metrics.}
We evaluate GuardVLA with two metrics for benign and trigger modes. In benign mode, model fidelity is measured by the success rate (SR), which quantifies the ability of the model to complete downstream tasks. In trigger mode, watermark detectability is measured by the watermark identification confidence (WIC), defined as the average classifier probability of identifying the queried model as watermarked.

\textbf{Implementation Details.}
During the watermark embedding phase, we set the message length to 6 and fine-tune all three models following the official training configurations. During the trigger projector and classifier co-training phase, we train for only 2,000 steps, set the margin hyper-parameter $\alpha$ to 0.5 for the triplet loss, and use weighting factors $\lambda=2.0$ for OpenVLA-OFT and $\pi_{0.5}$, and $\lambda=8.0$ for VLA-Adapter in the overall objective. During the ownership verification phase, we compute SR under benign mode following the LIBERO protocol, while in trigger mode we conduct 50 trials per task and record 10 classifier probability outputs for each trial. More details are provided in Appendix~\ref{sec:watermark} and \ref{sec:verification}.

\subsection{Watermark Effectiveness}
We evaluate GuardVLA on OpenVLA-OFT, VLA-Adapter, and $\pi_{0.5}$ by assessing task preservation and ownership verification across LIBERO task suites. We use visual steganography for OpenVLA-OFT and VLA-Adapter, and parameterized image warping~\cite{nguyen2021wanet} for $\pi_{0.5}$. Additional image-warping results on OpenVLA-OFT are provided in Appendix~\ref{sec:openvla_warping}, further demonstrating the compatibility of GuardVLA with different secret injection strategies.

\textbf{Model Fidelity.}
We evaluate model fidelity under benign mode using success rate (SR). To avoid additional training bias, we adopt officially released checkpoints as clean models for comparison. As shown in Table~\ref{tab:results}, GuardVLA does not compromise benign task performance across LIBERO suites. The watermarked OpenVLA-OFT improves the SR on the Object suite from 98.2\% to 99.4\%, while VLA-Adapter achieves a notable gain on the challenging LIBERO-10 suite, increasing from 89.8\% to 93.4\%. As shown in Table~\ref{tab:pi0_results}, GuardVLA also preserves the performance of $\pi_{0.5}$ under image-warping-based watermarking. The watermarked $\pi_{0.5}$ achieves comparable SR on Spatial, Goal, and Object, and improves LIBERO-10 performance from 90.8\% to 94.2\%.  These results indicate that GuardVLA preserves normal task execution across different VLA architectures and task suites.

\textbf{Watermark Detectability.}
We evaluate watermark detectability under trigger mode using watermark identification confidence (WIC). As shown in Table~\ref{tab:results}, for OpenVLA-OFT and VLA-Adapter, the protected models consistently achieve watermark identification confidence (WIC) above 99.9\%, whereas clean and noise models remain below 0.1\%. For $\pi_{0.5}$, the watermarked model achieves 99.85\% WIC, while the clean and noise models achieve only 0.03\% and 0.01\%. This large margin demonstrates that the proposed trigger-mode verification provides clear separation between watermarked and non-watermarked models. Overall, these results confirm that GuardVLA supports reliable ownership verification across different VLA architectures and task distributions.

\begin{table}[ht]
\centering
\caption{Performance comparison under different VLA architectures on the LIBERO dataset.}
\label{tab:results}
\resizebox{\linewidth}{!}{
\begin{tabular}{c c cc cc cc cc cc}
\toprule
\multirow{2}{*}{\textbf{Model}} &
\multirow{2}{*}{\textbf{Scenario}} &
\multicolumn{2}{c}{\textbf{Spatial}} &
\multicolumn{2}{c}{\textbf{Goal}} &
\multicolumn{2}{c}{\textbf{Object}} &
\multicolumn{2}{c}{\textbf{10}}  \\
\cmidrule(lr){3-4} \cmidrule(lr){5-6} \cmidrule(lr){7-8} \cmidrule(lr){9-10}
& & SR (\%) & WIC (\%) & SR (\%) & WIC (\%) & SR (\%) & WIC (\%) & SR (\%) & WIC (\%) \\
\midrule

\multirow{3}{*}{OpenVLA-OFT}
& Clean
& 99.6 & 0.01
& 96.6 & 0.00
& 98.2 & 0.60
& 94.4 & 0.00 \\

& \cellcolor{gray!18}Watermarked
& \cellcolor{gray!18}96.8 & \cellcolor{gray!18}100.00
& \cellcolor{gray!18}96.2 & \cellcolor{gray!18}99.72
& \cellcolor{gray!18}99.4 & \cellcolor{gray!18}100.00
& \cellcolor{gray!18}91.8 & \cellcolor{gray!18}99.99 \\

& Noise
& 95.8 & 0.13
& 97.0 & 0.00
& 98.6 & 0.02
& 91.6 & 0.01 \\

\midrule
\multirow{3}{*}{VLA-Adapter}
& Clean
& 95.2 & 0.12
& 97.2 & 0.50
& 98.6 & 0.04
& 89.8 & 0.01  \\

& \cellcolor{gray!18}Watermarked
& \cellcolor{gray!18}95.4 & \cellcolor{gray!18}99.94
& \cellcolor{gray!18}96.8 & \cellcolor{gray!18}99.85
& \cellcolor{gray!18}98.4 & \cellcolor{gray!18}100.00
& \cellcolor{gray!18}93.4 & \cellcolor{gray!18}99.90 \\

& Noise
& 96.2 & 0.11
& 96.6 & 0.42
& 98.2 & 0.00
& 82.0 & 0.03 \\
\bottomrule
\end{tabular}
}
\end{table}

\begin{table}[ht]
\centering
\caption{Performance of $\pi_{0.5}$ under different scenarios on the LIBERO benchmark.}
\label{tab:pi0_results}
\resizebox{0.6\linewidth}{!}{
\begin{tabular}{c c c c c c c}
\toprule
\multirow{2}{*}{\textbf{Model}} &
\multirow{2}{*}{\textbf{Scenario}} &
\textbf{Spatial} &
\textbf{Goal} &
\textbf{Object} &
\textbf{10} &
\multirow{2}{*}{\textbf{WIC (\%)}} \\
\cmidrule(lr){3-6}
& &
SR (\%) & SR (\%) & SR (\%) & SR (\%) & \\
\midrule

\multirow{3}{*}{$\boldsymbol{\pi_{0.5}}$}
& Clean
& 97.4  & 96.8 & 98.4 & 90.8 & 0.03 \\

& \cellcolor{gray!18} Watermarked
& \cellcolor{gray!18}99.0 & \cellcolor{gray!18}96.0 & \cellcolor{gray!18}98.2 & \cellcolor{gray!18}94.2 & \cellcolor{gray!18}99.85 \\

& Noise
& 89.8 & 96.6 & 97.2 & 90.6 & 0.01 \\
\bottomrule
\end{tabular}
}
\end{table}

\subsection{Robustness against Downstream Fine-tuning}
We further evaluate the robustness of GuardVLA by adapting the watermarked models across different datasets and fine-tuning strategies, and examining whether the embedded watermark remains detectable after post-release adaptations. A detailed theoretical justification for watermark persistence under downstream fine-tuning is provided in Appendix~\ref{theo-robust}.

\textbf{Cross-Dataset Fine-Tuning.} We start from a watermarked model trained on the LIBERO-10 suite, and subsequently adapt it to LIBERO-Spatial. As shown in Figure~\ref{fig:robustness} (a), the success rate (SR) initially starts at a lower level, then increases rapidly, and eventually stabilizes near 99\% on the target Spatial tasks. In contrast, the watermark identification confidence (WIC) remains consistently close to 100\% throughout the entire adaptation process. These results indicate that the embedded watermark can survive continued task-driven parameter updates across different task suites.

\textbf{Cross-Method Fine-Tuning.} We start from the watermarked model trained on LIBERO-10 and further fine-tune it on watermark-free LIBERO-10 data, while reducing the number of visual inputs from two views to a single view. As shown in Figure~\ref{fig:robustness} (b), the SR steadily increases and eventually stabilizes around 85\%-87\%, with the lower final performance mainly caused by the reduced visual input. In contrast, the WIC remains close to 100\% during the whole process. These results indicate that GuardVLA preserves reliable ownership evidence under substantially altered fine-tuning strategies.

\begin{figure}[h]
  \begin{center}
    \centerline{\includegraphics[width=\linewidth]{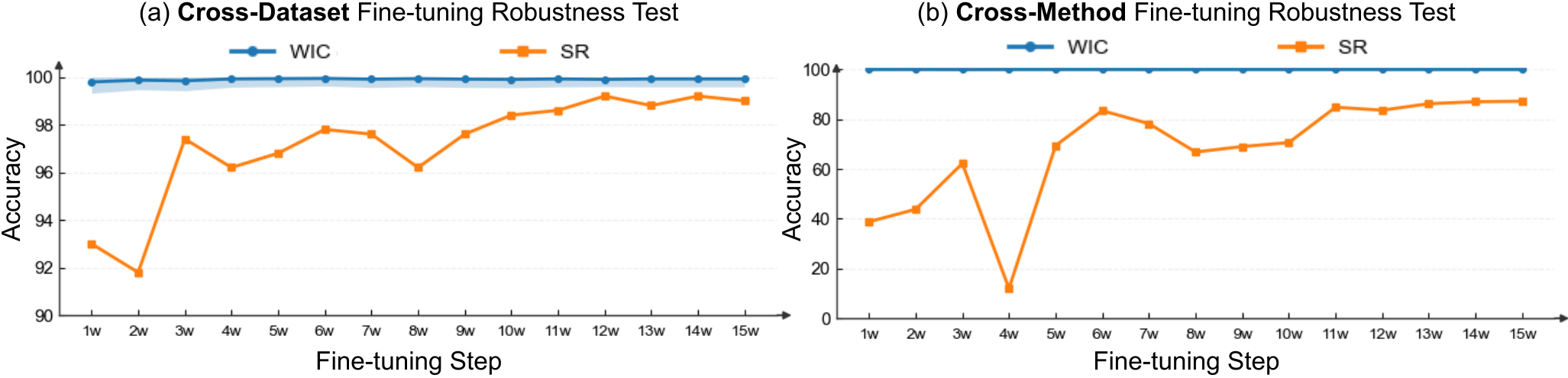}}
    \caption{\textbf{Robustness of GuardVLA under post-release adaptation}: (a) Cross-dataset fine-tuning from LIBERO-10 to LIBERO-Spatial. (b) Cross-method fine-tuning with reduced visual inputs. The shaded area around the WIC curve denotes the variance across trials.}
    \label{fig:robustness}
  \end{center}
\end{figure}

%% file: sections/discussion.tex
\section{Discussion}
This section provides additional analyses on GuardVLA. We examine trigger-module design choices, evaluate resistance to watermark suppression strategies, and explore an extension to behavior-level watermark detection. Sensitivity analyses of key hyper-parameters are provided in Appendix~\ref{sec:hyperparameter}.
\subsection{Ablation Study}
We examine different trigger modules for watermark detection on LIBERO-Spatial. As shown in Table~\ref{tab:trigger_modules}, using only a classifier head severely overfits: when evaluated on an alternative official clean model, it yields 98\% WIC, rendering it indistinguishable from the protected model and underscoring the necessity of an explicit trigger mechanism.

We further compare the vision backbone featurizer and the projector as trigger modules. Both achieve near-perfect detection, with nearly 100\% WIC for the watermarked model and below 0.3\% for clean and noise models. However, the projector-based design introduces substantially lower parameter overhead than the featurizer. Since the model protection platform must store distinct ownership keys for different protected models, the lighter design is more practical. Overall, the projector-classifier configuration is selected due to its strong detectability and deployment efficiency.

\begin{table}[h]
\centering
\caption{Trigger-module ablation study with parameter budgets and watermark detectability.}
\label{tab:trigger_modules}
\resizebox{0.6\columnwidth}{!}{
\begin{tabular}{c c c c c}
\toprule
\multirow{2}{*}{\textbf{Module}} &
\multirow{2}{*}{\textbf{Params}} &
\multicolumn{1}{c}{\textbf{Clean}} &
\multicolumn{1}{c}{\textbf{Watermarked}} &
\multicolumn{1}{c}{\textbf{Noise}} \\
\cmidrule(lr){3-5}
& & WIC (\%) & WIC (\%) & WIC (\%) \\
\midrule
classifier head & 289M & 98.01$\pm$0.56 & 99.90$\pm$0.16 & 0.02$\pm$0.01 \\ 
w/ featurizer & 876M & 0.00$\pm$0.00 & 100.00$\pm$0.03 & 0.24$\pm$0.09 \\
w/ projector  & 427M & 0.01$\pm$0.01 & 100.00$\pm$0.00 & 0.13$\pm$0.06 \\
\bottomrule
\end{tabular}
}
\end{table}

\subsection{Resistance to Watermark Suppression}
We further investigate whether different watermark suppression strategies can remove the ownership evidence. We consider three representative attacks, including weight quantization, knowledge distillation, and a projector-invariant adaptive attack. Experimental details are provided in Appendix~\ref{sec:suppression}.

\textbf{Weight Quantization.}
We apply INT4 weight quantization to examine whether strong model compression can suppress the embedded watermark. Quantization reduces the numerical precision of model weights and may disturb learned watermark-related representations.

\textbf{Knowledge Distillation.}
Following DySL-VLA~\cite{yang2026dysl}, we perform self distillation by transferring intermediate representations and action outputs from the original model to a student policy. This attack aims to reconstruct the normal task behavior of the protected model while potentially discarding hidden watermark-related features.

\textbf{Projector-Invariant Adaptive Attack.}
We further design an adaptive attack that explicitly suppresses the representation gap between benign and trigger projectors. The attacker optimizes the model to make the trigger-mode representation closer to the benign-mode representation, aiming to weaken the separability used by our verification mechanism.

As shown in Table~\ref{tab:adaptive_attacks}, GuardVLA remains robust under all tested watermark suppression strategies, with WIC staying close to 100\%. Meanwhile, all attacks lead to a certain degree of SR degradation, suggesting that the watermark is difficult to remove without compromising normal task performance.

\begin{table}[ht]
\centering
\small
\caption{Resistance against different adaptive attacks and model compression strategies.}
\label{tab:adaptive_attacks}
\begin{tabular}{lcc}
\toprule
\textbf{Attack} & \textbf{SR (\%)} & \textbf{WIC (\%)} \\
\midrule
No Attack & 96.8 & $100.0{\pm}0.00$ \\
Weight Quantization (INT4) & 86.4 & $100.0{\pm}0.00$ \\
Knowledge Distillation~\citep{yang2026dysl} & 93.2 & $100.0{\pm}0.00$ \\
Projector-Invariant Fine-tuning & 86.6 & $100.0{\pm}0.02$ \\
\bottomrule
\end{tabular}
\end{table}

\subsection{Extension: Behavior-Level Watermark Detection}
To enhance the observability and interpretability of watermark verification, we extend probability-based detection to the behavior level, where model ownership is inferred directly from distinctive end-effector motion patterns induced by the backdoor trigger. Specifically, a trigger projector together with a backdoor action head is employed to activate the backdoor and generate characteristic action behaviors. Similar to the probability-based variant, training combines supervised and contrastive objectives. Details of the experimental setup and additional visualizations are provided in Appendix~\ref{sec:behavior}. 

As shown in Figure~\ref{fig:action}, all three models exhibit similar trajectories and successfully complete downstream tasks under benign mode. Under trigger mode, however, the clean and noise models follow consistent motion patterns, while the watermarked model produces clearly divergent trajectories. This separation indicates that ownership can be verified directly from end-effector motion behaviors.

\begin{figure}[ht]
  \begin{center}
    \centerline{\includegraphics[width=\textwidth]{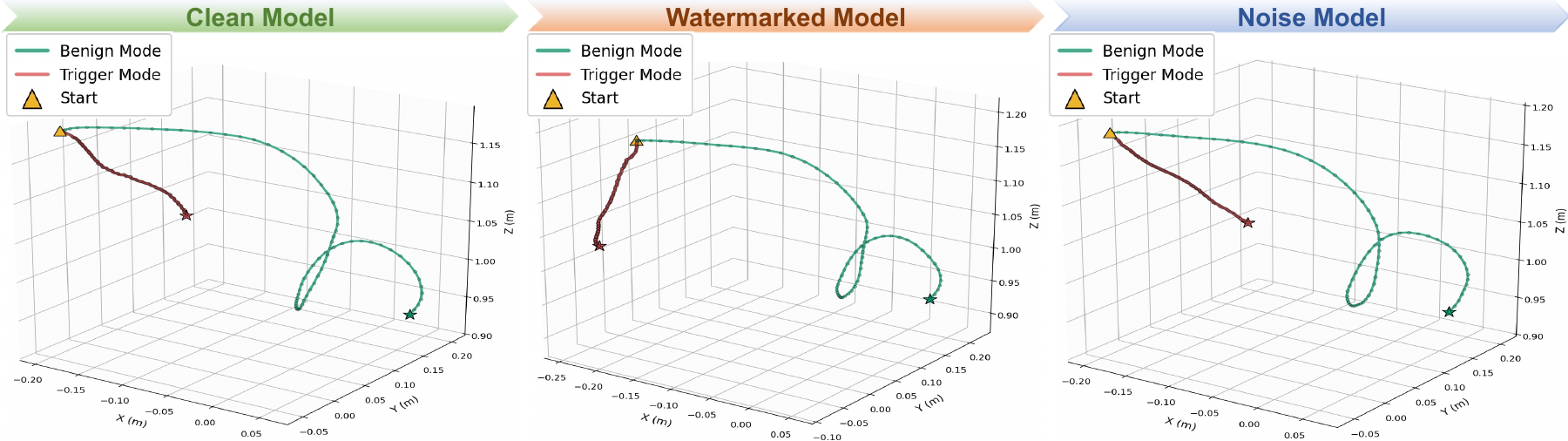}}
    \caption{\textbf{Behavior-level watermark verification}: End-effector trajectories of clean, watermarked, and noise models under benign and trigger modes. All models exhibit similar motions during benign execution, while only the watermarked model produces distinctive trajectories under trigger mode, enabling ownership verification from robot behaviors.}
    \label{fig:action}
  \end{center}
\end{figure}

%% file: sections/conclusion.tex
\section{Conclusion}
In this paper, we present GuardVLA, a backdoor-based ownership verification framework tailored to vision-language-action models. GuardVLA embeds secret messages into embodied visual data during protection-aware fine-tuning and uses a swap-and-detect mechanism to activate owner-specific evidence only during verification. Experiments across multiple VLA architectures show that GuardVLA can preserve benign task performance while enabling reliable watermark detection under the evaluated adaptation and suppression settings.

%% file: sections/appendix.tex
\section{Theoretical Analysis of Watermark Persistence under Fine-tuning}
\label{theo-robust}
We provide a formal justification that the GuardVLA watermark remains detectable after downstream fine-tuning, as long as the adaptation is moderate and does not explicitly optimize against the verification mechanism.

\begin{lemma} 
\label{lemma1}
Bounded Parameter Drift under Downstream Fine-tuning

Let $\theta_w$ denote a watermarked model obtained by minimizing Eq.~\ref{wm_ft} on the protected training distribution with watermarked observations:
\begin{equation}
    x_s = x_{\mathrm{ob}} + \delta_s, \qquad \delta_s = E_\phi(m_s,x_{ob}).
\end{equation}

Suppose an adversary fine-tunes $\theta_w$ on a new dataset $\mathcal{D}'$ without watermark injection, using stochastic gradient descent:
\begin{equation}
    \theta_{t+1} = \theta_t - \eta_t g_t,
\end{equation}
where $g_t = \nabla_\theta \mathcal{L}_{\mathrm{ft}}$ is the stochastic gradient at step $t$, and assume $\|g_t\| \le G$ for all $t$.

If the learning-rate scheduler satisfies $\sum_{t=0}^{T-1}\eta_t \le S$, then after $T$ steps,
\begin{equation}
    \|\theta_T - \theta_w\| \le GS.
\end{equation}
\end{lemma}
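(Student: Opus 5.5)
The bound is a direct consequence of unrolling the SGD recursion and applying the triangle inequality. The watermark-embedding step (Eq.~\ref{wm_ft}) only fixes the starting point $\theta_0=\theta_w$. The structure of $\mathcal{D}'$ and of $\mathcal{L}_{\mathrm{ft}}$ never enters, except through the uniform gradient bound $\|g_t\|\le G$.

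\textbf{Step 1 (telescoping).} We set $\theta_0=\theta_w$ and iterate $\theta_{t+1}=\theta_t-\eta_t g_t$. By induction on $T$,
\begin{equation*}
\theta_T-\theta_w=\sum_{t=0}^{T-1}(\theta_{t+1}-\theta_t)=-\sum_{t=0}^{T-1}\eta_t g_t .
\end{equation*}

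\textbf{Step 2 (triangle inequality and homogeneity).} Taking norms and using absolute homogeneity of the norm, we get
\begin{equation*}
\|\theta_T-\theta_w\|\le\sum_{t=0}^{T-1}\|\eta_t g_t\|=\sum_{t=0}^{T-1}|\eta_t|\,\|g_t\|.
\end{equation*}
This step needs $\eta_t\ge 0$, which we take as implicit in any learning-rate scheduler. It lets us replace $|\eta_t|$ by $\eta_t$.

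\textbf{Step 3 (apply the assumptions).} Bounding each $\|g_t\|\le G$ gives $\|\theta_T-\theta_w\|\le G\sum_{t=0}^{T-1}\eta_t$. The scheduler assumption $\sum_{t=0}^{T-1}\eta_t\le S$ then yields $\|\theta_T-\theta_w\|\le GS$. Because the gradient bound holds pathwise for every realization of the minibatches, the conclusion holds almost surely, not only in expectation.

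The proof has no genuine obstacle. The only points to state explicitly are that the step sizes are nonnegative and that $G$ bounds every realized stochastic gradient rather than only its mean. The second point is needed so that the bound is deterministic. If only $\mathbb{E}\|g_t\|\le G$ were assumed, the same argument would give $\mathbb{E}\|\theta_T-\theta_w\|\le GS$ instead.

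If the optimizer used momentum or Adam-style preconditioning, the update direction would no longer be $g_t$. In that case we would need an analogous uniform bound on the actual update direction, and the same telescoping argument would go through unchanged. For the plain SGD update stated in the lemma this is not needed.
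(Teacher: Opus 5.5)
Your proposal is correct and follows the paper's own proof: telescope the SGD updates to get $\theta_T-\theta_w=-\sum_{t=0}^{T-1}\eta_t g_t$, then apply the triangle inequality together with $\|g_t\|\le G$ and $\sum_{t=0}^{T-1}\eta_t\le S$. Your explicit remark that $\eta_t\ge 0$ is needed to pass from $\sum_t|\eta_t|$ to $\sum_t\eta_t$ is a worthwhile addition, since the paper uses this silently.
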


\begin{proof}
By telescoping the SGD updates,
\begin{align}
\theta_T - \theta_w
&= \sum_{t=0}^{T-1} (\theta_{t+1}-\theta_t) = -\sum_{t=0}^{T-1}\eta_t g_t .
\end{align}
Taking norms and applying the triangle inequality yields
\begin{align}
\|\theta_T-\theta_w\|
&\le \sum_{t=0}^{T-1}\eta_t\|g_t\| \le G\sum_{t=0}^{T-1}\eta_t \le GS .
\end{align}
\end{proof}

\begin{lemma}
\label{lemma2}
Margin-Induced Robustness of Trigger-Mode Detector.

Let $s_\theta(x_{\mathrm{ob}},i) $ denote the watermark logit produced in trigger mode by the swap-and-detect verification pipeline:
\begin{equation}
    s_\theta(x_{\mathrm{ob}},i)
=\mathrm{logit}\!\left(
H_c\!\left(
L\big(i,P_t(V(x_{\mathrm{ob}};\theta);\psi)\big);\,
\theta,\omega
\right)\right).
\end{equation}

Assume that after training with the triplet loss in Eq.~\ref{eq:tri}, and cross-entropy loss in Eq.~\ref{eq:ce}, there exists a positive detection margin $\Delta>0$ such that:
\begin{equation}
    \mathbb{E}[s_{\theta_w}] - \mathbb{E}[s_{\theta_c}] \ge \Delta,
\qquad
\mathbb{E}[s_{\theta_w}] - \mathbb{E}[s_{\theta_r}] \ge \Delta,
\end{equation}
where $\theta_c$ and $\theta_r$ are the clean and noise baselines defined in Eq.~\ref{eq:clean} and Eq.~\ref{eq:noise}.

Further assume that $s_\theta$ is Lipschitz-continuous with respect to model parameters: there exists $L_s>0$ such that for any $\theta,\theta'$,
\begin{equation}
    |s_\theta(x_{\mathrm{ob}},i)-s_{\theta'}(x_{\mathrm{ob}},i)|
\le L_s \|\theta-\theta'\|.
\end{equation}

Then for any $\theta'$ satisfying,
\begin{equation}
    \|\theta'-\theta_w\| < \frac{\Delta}{2L_s},
\end{equation}
the trigger-mode classifier with threshold $\tau_{\mathrm{wic}}$ will still identify $\theta'$ as watermarked, \textit{i.e.},
\begin{equation}
    \mathrm{WIC}(M(\cdot;\theta')) \ge \tau_{\mathrm{wic}} .
\end{equation}
\end{lemma}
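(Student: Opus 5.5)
The plan is a margin-versus-perturbation argument in the logit domain, followed by a translation to the probability-based statistic $\mathrm{WIC}$. The statement does not fix how $\tau_{\mathrm{wic}}$ relates to the margin, so I will make that explicit. I take the threshold to be calibrated at the midpoint of the margin, i.e.\ in logit space
\begin{equation}
\tau^{\star} \;=\; \mathbb{E}[s_{\theta_w}] - \tfrac{\Delta}{2},
\end{equation}
with $\tau_{\mathrm{wic}}$ the corresponding value on the probability scale, as in the calibration of Appendix~\ref{sec:threshold}. By the margin assumption, $\tau^{\star} \ge \max\{\mathbb{E}[s_{\theta_c}],\mathbb{E}[s_{\theta_r}]\} + \Delta/2$. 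So the same threshold rejects the clean and noise references with slack $\Delta/2$, and the remaining task is to show the watermarked side keeps slack as well.

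\textbf{Step 1 (pointwise to expectation).} Fix $\theta'$ with $\|\theta'-\theta_w\| < \Delta/(2L_s)$. The Lipschitz assumption holds for every $(x_{\mathrm{ob}},i)$. Taking expectations over the verification distribution and using $|\mathbb{E}[X]|\le\mathbb{E}|X|$ gives
\begin{equation}
\big|\mathbb{E}[s_{\theta'}]-\mathbb{E}[s_{\theta_w}]\big| \;\le\; L_s\|\theta'-\theta_w\| \;<\; \tfrac{\Delta}{2}.
\end{equation}
Hence $\mathbb{E}[s_{\theta'}] > \mathbb{E}[s_{\theta_w}]-\Delta/2 = \tau^{\star}$.

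\textbf{Step 2 (logit to $\mathrm{WIC}$).} The sigmoid $\sigma$ is strictly increasing, so $\sigma(\mathbb{E}[s_{\theta'}]) > \sigma(\tau^{\star}) = \tau_{\mathrm{wic}}$. This step needs care, because $\mathrm{WIC}$ is an average of probabilities, $\mathbb{E}[\sigma(s_{\theta'})]$, not $\sigma$ of the averaged logit, and $\sigma$ is neither convex nor concave, so Jensen does not directly apply. I see two options.
\begin{itemize}
\item Preferred route: apply Step 1 pointwise. Assuming the watermarked model satisfies $s_{\theta_w}(x_{\mathrm{ob}},i)\ge\tau^{\star}+\Delta/2$ for (almost) every verification query, which is consistent with the empirical $\mathrm{WIC}\approx 100\%$ in Table~\ref{tab:results}, the bound gives $s_{\theta'}(x_{\mathrm{ob}},i)>\tau^{\star}$ for every query. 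Monotonicity then yields $\mathbb{E}[\sigma(s_{\theta'})] \ge \sigma(\tau^{\star})$.
\item Fallback route: read $\mathrm{WIC}$ as the sigmoid of the average logit, which the paper's loose definition permits, and conclude from Step 1 directly.
\end{itemize}

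\textbf{Step 3 (main obstacle).} Reconciling the expectation-level margin in the hypothesis with the average-probability definition of $\mathrm{WIC}$ in Step 2 is where I expect the difficulty. The interpretation of $\tau_{\mathrm{wic}}$ and $\mathrm{WIC}$ chosen there has to be stated explicitly in the proof.

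\textbf{Step 4 (link to Lemma~\ref{lemma1}).} Combining with Lemma~\ref{lemma1}, whenever $GS < \Delta/(2L_s)$ the fine-tuned parameters $\theta_T$ fall inside the ball around $\theta_w$. The watermark is then still detected, which is the persistence claim this appendix sets out to justify.
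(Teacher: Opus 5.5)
Your Step 1 is exactly the paper's proof. The pointwise Lipschitz bound gives $s_{\theta'}\ge s_{\theta_w}-\Delta/2$, hence $\mathbb{E}[s_{\theta'}]-\mathbb{E}[s_{\theta_c}]\ge\Delta/2$, and likewise for $\theta_r$. The paper then closes by assertion (``the decision boundary calibrated during training remains valid''). It never fixes $\tau_{\mathrm{wic}}$ and never addresses that $\mathrm{WIC}$ averages probabilities rather than logits. So you are right that Step 2 carries the real content, and the stated hypotheses cannot supply it. With your midpoint threshold, take $s_{\theta_w}$ equal to $100$ or $-50$ with probability $1/2$ each, clean and noise logits identically $-10$, and $\Delta=10$. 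The margin holds, $\tau^\star=20$ and $\tau_{\mathrm{wic}}=\sigma(20)\approx 1$, yet $\mathrm{WIC}(M(\cdot;\theta_w))\approx 1/2$. The conclusion therefore fails already at $\theta'=\theta_w$.

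Your repair does not close this gap, for three reasons.

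\textbf{Preferred route.} By definition $\tau^\star+\Delta/2=\mathbb{E}[s_{\theta_w}]$. The hypothesis ``$s_{\theta_w}\ge\tau^\star+\Delta/2$ for almost every query'' therefore says $s_{\theta_w}\ge\mathbb{E}[s_{\theta_w}]$ almost surely, which forces $s_{\theta_w}$ to be almost surely constant. The route covers only a degenerate detector, and a high empirical $\mathrm{WIC}$ does not provide this property.

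\textbf{Fallback route.} It is unavailable, because the paper explicitly defines $\mathrm{WIC}$ as the average of step-wise classifier probabilities, not the sigmoid of an averaged logit.

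\textbf{Calibration.} The midpoint rule is not the calibration of Appendix~\ref{sec:threshold}. There, $\tau_{\mathrm{wic}}$ is the empirical $(1-\delta_{\mathrm{wic}})$-quantile of pooled clean and noise scores on the probability scale. A gap between mean logits says nothing about that upper tail.

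The fix is to decouple the threshold from the mean and assume a pointwise margin instead. Assume a logit level $\tau^\star$ with $\mathrm{ess\,inf}\,s_{\theta_w}\ge\tau^\star+\Delta/2$, and $\tau_{\mathrm{wic}}\le\sigma(\tau^\star)$. This is compatible with the quantile rule as long as that quantile lies below $\sigma(\tau^\star)$. The pointwise Lipschitz bound then gives $s_{\theta'}>\tau^\star$ on every query. Monotonicity of $\sigma$ makes every per-query probability exceed $\sigma(\tau^\star)\ge\tau_{\mathrm{wic}}$, and hence so does their average. This hypothesis is strictly stronger than the lemma's expectation-level margin. The example above shows that some such strengthening, or a different definition of $\mathrm{WIC}$, is unavoidable.
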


\begin{proof}
    From the Lipschitz condition,
    \begin{equation}
        |s_{\theta'} - s_{\theta_w}| \le L_s \|\theta'-\theta_w\|.
    \end{equation}

    If $\|\theta'-\theta_w\| < \Delta/(2L_s)$, then
    \begin{equation}
        s_{\theta'} \ge s_{\theta_w} - \Delta/2 .
    \end{equation}

    Combining this with the margin assumption,
    \begin{equation}
        \mathbb{E}[s_{\theta'}] - \mathbb{E}[s_{\theta_c}]
\ge \Delta/2 > 0,
    \end{equation}
    and similarly for $\theta_r$.

    Therefore the decision boundary calibrated during training remains valid, and the resulting watermark identification confidence exceeds $\tau_{\mathrm{wic}}$.
\end{proof}

\begin{lemma}
Persistence of GuardVLA Watermarks after Fine-tuning

Let $\theta_w$ be a protected model trained with watermarked observation $x_s=x_{\mathrm{ob}}+\delta_s$ and with the trigger-mode classifier optimized by Eq.~\ref{eq:joint}.

Suppose an adversary fine-tunes $\theta_w$ on a watermark-free dataset $\mathcal{D}'$ for $T$ steps using SGD, producing $\theta_T$. If:

(1) the parameter drift satisfies the bound in Lemma~\ref{lemma1}, \textit{i.e.},
\begin{equation}
    \|\theta_T-\theta_w\| \le GS,
\end{equation}

(2) the detector margin $\Delta$ from Lemma~\ref{lemma2} obeys,
\begin{equation}
    L_s GS < \Delta/2,
\end{equation}

then the watermark remains detectable after fine-tuning:
\begin{equation}
    \mathrm{WIC}(M(\cdot;\theta_T)) \ge \tau_{\mathrm{wic}} .
\end{equation}
\end{lemma}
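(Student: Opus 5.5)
The plan is to compose Lemma~\ref{lemma1} and Lemma~\ref{lemma2}. The fine-tuned parameter $\theta_T$ is the candidate $\theta'$ in Lemma~\ref{lemma2}. Hypothesis (1) controls how far $\theta_T$ can drift, and hypothesis (2) places that drift inside the robustness radius $\Delta/(2L_s)$.

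First, apply Lemma~\ref{lemma1} to the adversary's SGD trajectory $\theta_0=\theta_w,\theta_1,\dots,\theta_T$ on the watermark-free dataset $\mathcal D'$. The gradients are bounded by $G$ and the learning-rate sum is at most $S$, so the telescoping bound gives $\|\theta_T-\theta_w\|\le GS$. This is exactly condition (1), so in the statement it can be either assumed or derived.

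Second, convert this into the strict inequality that Lemma~\ref{lemma2} requires. Since $L_s>0$, condition (2), $L_s GS<\Delta/2$, is equivalent to $GS<\Delta/(2L_s)$. Combining with the first step,
\begin{equation}
\|\theta_T-\theta_w\|\le GS<\frac{\Delta}{2L_s}.
\end{equation}
Thus $\theta'=\theta_T$ satisfies the hypothesis of Lemma~\ref{lemma2}.

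Third, invoke the conclusion of Lemma~\ref{lemma2} with $\theta'=\theta_T$. This gives $\mathbb E[s_{\theta_T}]-\mathbb E[s_{\theta_c}]\ge \Delta/2>0$, the analogous bound for $\theta_r$, and hence $\mathrm{WIC}(M(\cdot;\theta_T))\ge\tau_{\mathrm{wic}}$.

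The main obstacle is not the chaining itself but checking that the two lemmas refer to the same parameter space and the same pipeline. Lemma~\ref{lemma2} treats $s_\theta$ as depending on the model parameters $\theta$ with the verifier modules $\psi,\omega$ held fixed. So I would state explicitly that the adversary's fine-tuning updates only $\theta$, the modules the auditing interface exposes. Under that reading, the $G$ and $S$ in Lemma~\ref{lemma1} bound the displacement in the same norm in which $L_s$ is defined, and the conclusion follows.

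Lemma~\ref{lemma2}'s passage from a positive margin to $\mathrm{WIC}\ge\tau_{\mathrm{wic}}$ relies on $\tau_{\mathrm{wic}}$ being calibrated within that margin, as described in Appendix~\ref{sec:threshold}. I would take this inherited step as given and not re-prove it here.
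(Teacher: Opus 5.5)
Your proposal is correct and follows essentially the same route as the paper: both take the drift bound $\|\theta_T-\theta_w\|\le GS$ from Lemma~\ref{lemma1}, use $L_sGS<\Delta/2$ to place $\theta_T$ strictly inside the robustness radius $\Delta/(2L_s)$, and then invoke Lemma~\ref{lemma2} with $\theta'=\theta_T$. Your version is more explicit than the paper's three-line argument in two places: the conversion to the strict inequality, and the assumptions it inherits (fixed $\psi,\omega$, a common parameter norm, and $\tau_{\mathrm{wic}}$ calibrated inside the margin), which the paper leaves implicit.
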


\begin{proof}
    Lemma~\ref{lemma1} bounds the deviation between $\theta_T$ and $\theta_w$.

    Lemma~\ref{lemma2} ensures that this deviation lies within the robustness radius.

    Therefore, when evaluated in trigger mode via swap-and-detect, the fine-tuned mode still produces watermark logits separated from the clean and noise baselines, leading to WIC above the decision threshold $\tau_{\mathrm{wic}}$.
\end{proof}

\section{Calibrating \texorpdfstring{$\tau_{\mathrm{sr}}$}{tau-sr} and \texorpdfstring{$\tau_{\mathrm{wic}}$}{tau-wic} in Swap-and-Detect Verification}
\label{sec:threshold}

We describe principled procedures for selecting the benign-mode gate
$\tau_{\mathrm{sr}}$ and the trigger-mode watermark threshold
$\tau_{\mathrm{wic}}$ used in the swap-and-detect protocol.
The goal is to filter severely degraded models before entering trigger
mode and to control false watermark claims while remaining robust to
moderate downstream fine-tuning.

\subsection{Choosing \texorpdfstring{$\tau_{\mathrm{sr}}$}{tau-sr} via a one-sided test against degraded performance}

Let $\widehat{SR}(M)$ be the empirical benign success rate estimated from $n$ rollouts,
\begin{equation}
\widehat{SR}(M)=\frac{1}{n}\sum_{j=1}^n Y_j,\qquad Y_j\in\{0,1\}.
\end{equation}

The threshold $\tau_{\mathrm{sr}}$ is used solely to filter severely
degraded or crashed models and is not intended to compare clean and
watermarked checkpoints.

We define a model to be \emph{crashed} if its true benign success
probability satisfies $p(M)\le p_{\min}$, where $p_{\min}\in(0,1)$ is an
operational floor determined by the difficulty of the evaluation suite.
We consider the one-sided hypotheses
\begin{equation}
H_0:\ p(M)\le p_{\min},\qquad H_1:\ p(M)>p_{\min},
\end{equation}
and accept the model as usable whenever
$\widehat{SR}(M)\ge\tau_{\mathrm{sr}}$.

Since $\widehat{SR}(M)$ is the mean of Bernoulli variables, Hoeffding's
inequality yields:
\begin{equation}
\Pr\!\left(\widehat{SR}(M)\ge p(M)+\epsilon\right)
\le \exp(-2n\epsilon^2).
\end{equation}

Fix $\delta_{\mathrm{sr}}\in(0,1)$ and define
\begin{equation}
\tau_{\mathrm{sr}}
=
p_{\min}
+
\sqrt{\frac{\ln(1/\delta_{\mathrm{sr}})}{2n}} .
\end{equation}

If $p(M)\le p_{\min}$, then
\begin{equation}
\Pr\!\left(\widehat{SR}(M)\ge\tau_{\mathrm{sr}}\right)
\le \delta_{\mathrm{sr}},
\end{equation}
so a degraded model is rejected by the benign gate with probability at
least $1-\delta_{\mathrm{sr}}$.
Conversely, if $p(M)\ge p_{\min}+\gamma$ for some $\gamma>0$, then
\begin{equation}
\Pr\!\left(\widehat{SR}(M)<\tau_{\mathrm{sr}}\right)
\le
\exp\!\left(
-2n\big(\gamma-\sqrt{\tfrac{\ln(1/\delta_{\mathrm{sr}})}{2n}}\big)^2
\right),
\end{equation}
showing that the false rejection probability decays exponentially in $n$
whenever the margin $\gamma$ is non-negligible.

We adopt either an absolute floor reflecting minimal operability for a
suite, or a relative floor
$p_{\min}=\kappa\,\widehat{SR}_{\mathrm{clean}}$
computed once from an official clean checkpoint, with
$\kappa\in(0,1)$ chosen conservatively.
This does not require the queried model to outperform the clean model
and only treats models far below the normal regime as degraded.

\subsection{Choosing \texorpdfstring{$\tau_{\mathrm{wic}}$}{tau-wic} via a quantile rule with finite-sample control}

Under trigger mode, the classifier head attached to the trigger projector
outputs watermark identification confidence $\mathrm{WIC}(M)$.
We collect $m$ trigger queries and obtain scores
$\{W_k(M)\}_{k=1}^m\subset[0,1]$.
Negative scores are obtained from the clean and noise references
$M_c$ and $M_r$, forming the pooled set
$\mathcal{W}_0=\{W_k(M_c)\}_{k=1}^m\cup\{W_k(M_r)\}_{k=1}^m$.

We set
\begin{equation}
\tau_{\mathrm{wic}}
=
\widehat{Q}_{1-\delta_{\mathrm{wic}}}(\mathcal{W}_0),
\end{equation}
where $\widehat{Q}_{1-\delta_{\mathrm{wic}}}$ denotes the empirical
$(1-\delta_{\mathrm{wic}})$ quantile and $\delta_{\mathrm{wic}}$ is the
target false watermark claim rate.

Let $F_0$ be the true CDF of negative scores and $\widehat{F}_0$ the
empirical CDF computed from $N_0=|\mathcal{W}_0|=2m$ samples.
By the Dvoretzky--Kiefer--Wolfowitz inequality, for any $\varepsilon>0$,
\begin{equation}
\Pr\!\left(\sup_w |\widehat{F}_0(w)-F_0(w)|>\varepsilon\right)
\le 2\exp(-2N_0\varepsilon^2).
\end{equation}

With probability at least $1-\beta$, we therefore have
\begin{equation}
F_0(\tau_{\mathrm{wic}})
\ge 1-\delta_{\mathrm{wic}}-\varepsilon_{\mathrm{wic}},
\qquad
\varepsilon_{\mathrm{wic}}
=
\sqrt{\frac{\ln(2/\beta)}{2N_0}} .
\end{equation}

Hence an unwatermarked model exceeds $\tau_{\mathrm{wic}}$ with
probability at most $\delta_{\mathrm{wic}}+\varepsilon_{\mathrm{wic}}$
with confidence $1-\beta$.

\section{Details of Watermarking Phase}
\label{sec:watermark}
This section details the complete watermarking phase of GuardVLA. Algorithm~\ref{alg:watermarking_phase} provides a unified procedural description of the entire pipeline. We detail the watermark embedding configurations for OpenVLA-OFT and VLA-Adapter, followed by the hyper-parameter settings for projector and classifier co-training.

\begin{algorithm}[ht]
\caption{Watermarking Phase}
\label{alg:watermarking_phase}
\begin{algorithmic}[1]
\Require Embodied dataset $\mathcal{D}$; base VLA $M(\cdot;\theta)$ with frozen backbone $V(\cdot)$ and LLM $L(\cdot)$;
steg-embedder $E_\phi(\cdot)$; target secret $m_s$; message space $\mathcal{M}$;
training loss $\mathcal{L}_{\text{train}}$; weighting factor $\lambda$; margin $\alpha$;
steps $T_1$ (Phase I) and $T_2$ (Phase II).
\Ensure Protected model $M_w(\cdot;\theta_w)$; surrogate clean model $M_c(\cdot;\theta_c)$; noise model $M_r(\cdot;\theta_r)$;
trigger projector $P_t(\cdot;\psi)$ and classifier head $H_c(\cdot;\omega)$.

\Statex
\State {\bf // Phase I: Watermark Embedding via Safe Fine-tuning}
\State Compute watermark residual $\delta_s \leftarrow E_\phi(m_s)$.
\For{$t=1,2,\dots,T_1$}
  \State Sample $(x_{ob}, i, a) \sim \mathcal{D}$.
  \State Watermarked obs: $x_s \leftarrow x_{ob} + \delta_s$.
  \State Sample $m_r \sim \mathcal{M}\setminus\{m_s\}$; $x_r \leftarrow x_{ob} + E_\phi(m_r)$. 
  \State Update $\theta_w$ by minimizing $\mathcal{L}_{\text{train}}(M_w(x_s,i;\theta_w), a)$. \hfill $\triangleright$ Eq.~\ref{wm_ft}
  \State Update $\theta_c$ by minimizing $\mathcal{L}_{\text{train}}(M_c(x_{ob},i;\theta_c), a)$. \hfill $\triangleright$ Eq.~\ref{eq:clean}
  \State Update $\theta_r$ by minimizing $\mathcal{L}_{\text{train}}(M_r(x_r,i;\theta_r), a)$. \hfill $\triangleright$ Eq.~\ref{eq:noise}
\EndFor

\Statex
\State {\bf // Phase II: Trigger Projector and Classifier Co-training}
\State Initialize $P_t(\cdot;\psi)$ and $H_c(\cdot;\omega)$; keep $V(\cdot)$ and $L(\cdot)$ frozen.
\For{$t=1,2,\dots,T_2$}
  \State Sample $(x_{ob}, i, a) \sim \mathcal{D}$.
  \State Extract visual features $h \leftarrow V(x_{ob})$.
  \State Project tokens for each model:
  \State \hspace{0.8em}$v_c \leftarrow P_t(h;\psi)$ on $M_c$, \;\; $v_w \leftarrow P_t(h;\psi)$ on $M_w$, \;\; $v_r \leftarrow P_t(h;\psi)$ on $M_r$.
  \State Triplet loss:
  $\mathcal{L}_{tri} \leftarrow \big[d(v_c,v_r) - d(v_c,v_w) + \alpha\big]_+$. \hfill $\triangleright$ Eq.~\ref{eq:tri}
  \State Classifier probabilities:
  $p_c \leftarrow H_c(L(v_c,i);\omega)$,
  $p_w \leftarrow H_c(L(v_w,i);\omega)$,
  $p_r \leftarrow H_c(L(v_r,i);\omega)$.
  \State $\mathcal{L}_{cls} \leftarrow \mathrm{BCE}(p_w,1)+\mathrm{BCE}(p_c,0)+\mathrm{BCE}(p_r,0)$. \hfill $\triangleright$ Eq.~\ref{eq:ce}
  \State Joint objective: $\mathcal{L} \leftarrow \mathcal{L}_{cls} + \lambda\,\mathcal{L}_{tri}$. \hfill $\triangleright$ Eq.~\ref{eq:joint}
  \State Update $(\psi,\omega)$ by SGD on $\nabla_{\psi,\omega}\mathcal{L}$.
\EndFor
\end{algorithmic}
\end{algorithm}

\subsection{Watermark Embedding}
This phase embeds the backdoor watermark by fine-tuning the target VLA under the owner-specified training recipe. Each training input includes two visual views, and proprioceptive states are enabled. Image augmentation is applied. The embedded binary secret is fixed to \texttt{010101}. Training uses distributed data parallelism on two NVIDIA H200 GPUs. 

\paragraph{OpenVLA-OFT Watermarking.}
For OpenVLA-OFT, the fine-tuning is performed with the base model OpenVLA-7B on the modified LIBERO RLDS data. The action learning objective uses L1 regression with continuous actions. LoRA is used with rank 32. The optimization uses batch size 4, and learning rate $5\times10^{-4}$. Training runs for 150,000 steps, with the decay scheduler starting after 100,000 steps. 

\paragraph{VLA-Adepter Watermarking.}
For VLA-Adapter, the fine-tuning is performed with a lightweight VLM backbone specified by Qwen2.5-0.5B~\cite{team2024qwen2}. LoRA is enabled with rank 64. The optimization uses batch size 8, learning rate $2\times10^{-4}$ and runs for 200,000 steps.

\paragraph{\texorpdfstring{$\pi_{0.5}$}{pi-0.5} Watermarking.}
For $\pi_{0.5}$, the action target is continuous and optimized with the native $\pi_{0.5}$ flow-matching objective, using MSE on the denoising velocity with an action horizon of 10. $\pi_{0.5}$ adopts full-parameter fine-tuning. We train with a global batch size of 256, a peak learning rate of $5\times10^{-5}$, and 30,000 training steps. EMA decay is set to 0.999.

\subsection{Projector and Classifier Co-training}
This phase trains the trigger projector and classifier head used for watermark activation and detection. Training is performed using three models: a surrogate clean model, the protected watermarked model, and a noise model. The co-training is executed on a single NVIDIA H200 GPU.  Visual inputs consist of two views with proprioceptive states, and image augmentation is enabled. A batch size of 4 and a learning rate of $5\times10^{-4}$ are adopted. Training runs for 2,000 steps with learning rate decay starting after 1,000 steps. The triplet loss is enabled with margin $\alpha=0.5$, and the supervised classification loss is weighted by $\lambda=2.0$ for OpenVLA-OFT and $\pi_{0.5}$, and $\lambda=8.0$ for VLA-Adapter.

\section{Details of Verification Phase}
\label{sec:verification}
This section presents the complete verification procedure of GuardVLA.  Algorithm~\ref{alg:verification_phase} provides a unified description of the pipeline. We describe the performance test protocol across different LIBERO task suites, followed by the watermark detection process.

\paragraph{Performance Test.}
For each task suite, we perform 50 rollouts per task with a stabilization period of 10 simulation steps before control begins. Default initial states are used for all trials. The maximum length is suite-dependent: 220 steps for LIBERO-Spatial, 280 for LIBERO-Object, 300 for LIBERO-Goal, and 520 for LIBERO-10.

\paragraph{Watermark Detection.}
For each task, we run 50 trials, and within each rollout the classifier head outputs watermark probabilities over 10 environment steps. Watermark identification confidence is obtained by averaging these step-wise probabilities across all trials and tasks.

\begin{algorithm}[ht]
\caption{Verification Phase}
\label{alg:verification_phase}
\begin{algorithmic}[1]
\Require Suspected policy $M(\cdot;\theta)$; benign projector $P_b$; 
trigger projector $P_t(\cdot;\psi)$ and classifier head $H_c(\cdot;\omega)$;
verification task suite $\mathcal{T}$; trials per task $N$; steps per trial $S$;
thresholds $\tau_{\mathrm{sr}}$ and $\tau_{\mathrm{wic}}$.
\Ensure Decision $\in\{\textsc{Crashed},\textsc{Unwatermarked},\textsc{Copy}\}$.

\Statex
\State {\bf // Performance Test}
\State Attach $P_b$ to $M$ and execute $\mathcal{T}$ for $N$ trials with $S$ steps.
\State Compute success rate $\mathrm{SR}(M)$. 

\Statex
\If{$\mathrm{SR}(M) < \tau_{\mathrm{sr}}$}
    \State \textbf{return} \textsc{Crashed}.
\EndIf

\Statex
\State {\bf // Watermark Detection}
\State Replace $P_b$ with $P_t$ and attach $H_c$.
\State Execute $\mathcal{T}$ for $N$ trials with $S$ steps.
\State Compute watermark identification confidence $\mathrm{WIC}(M)$.

\Statex

\If{$\mathrm{WIC}(M) \ge \tau_{\mathrm{wic}}$}
    \State \textbf{return} \textsc{Copy}.
\Else
    \State \textbf{return} \textsc{Unwatermarked}.
\EndIf

\end{algorithmic}
\end{algorithm}

\section{Details of Watermark Suppression Strategies}
\label{sec:suppression}
We conduct watermark suppression experiments on the OpenVLA-OFT model watermarked on LIBERO-Spatial. Each strategy starts from the protected checkpoint and attempts to remove ownership evidence while retaining benign task performance. We then evaluate SR and WIC following the same protocol as in the main experiments.
\paragraph{Weight Quantization.}
We suppress watermark behavior at inference time without further fine-tuning. Specifically, we load the watermarked model with reduced parameter precision using 4-bit quantization, while keeping all other inference settings unchanged, including two-view visual input, proprioceptive input, preprocessing, and action unnormalization. Since this method does not involve optimization or additional training, the suppression strength is controlled solely by the quantization precision.

\paragraph{Knowledge Distillation.}
For knowledge distillation, we adopt a self-distillation setup in which the student policy is trainable and the teacher policy is frozen. The teacher is initialized from the same checkpoint as the student by default. Training uses two visual views, proprioceptive input, and image augmentation. The objective combines token-level KL distillation and task supervision:
\begin{equation}
\mathcal{L}_{distill}
=
\alpha_{\mathrm{KD}}\mathcal{L}_{\mathrm{KD}}
+
\alpha_{\mathrm{CE}}\mathcal{L}_{\mathrm{CE}} .
\end{equation}

We use LoRA fine-tuning with rank 32, batch size 8, learning rate $5\times10^{-5}$, and 400 training steps. A MultiStepLR scheduler is applied with decay starting at step 300 and $\gamma=0.1$. The distillation temperature is set to $T=1.0$, with $\alpha_{\mathrm{KD}}=0.9$ and $\alpha_{\mathrm{CE}}=0.1$. This conservative setup aims to weaken hidden watermark-related behaviors while preserving benign task performance.

\paragraph{Projector-Invariant Adaptive Attack.}
For the projector-invariant adaptive attack, we update only LoRA-adapted target-policy parameters, while keeping the trigger projector and classifier head frozen. The attack explicitly suppresses the detector response under the trigger path while maintaining the original task objective. The training loss is:
\begin{equation}
    \mathcal{L}_{attack}
=
\lambda_{\mathrm{task}}\mathcal{L}_{\mathrm{task}}
+
\lambda_{\mathrm{remove}}\mathcal{L}_{\mathrm{remove}} .
\end{equation}

Here, $\mathcal{L}_{\mathrm{remove}}$ is a BCE-based detector suppression loss computed from trigger-path hidden states. The default setting uses two-view visual input, proprioception, and image augmentation. We use LoRA rank 32, batch size 4, learning rate $5\times10^{-4}$, and up to 1,500 training steps. The loss weights are set to $\lambda{\mathrm{task}}=1.0$, $\lambda_{\mathrm{remove}}=0.01$. This attack represents a detector-aware removal attempt that directly weakens the separability between benign and trigger-mode representations.

\section{Details of LIBERO Simulation Benchmarks}
\label{sec:libero}
The LIBERO benchmark~\cite{liu2023libero} provides a diverse set of simulated manipulation tasks organized into four suites: LIBERO-Spatial, LIBERO-Object, LIBERO-Goal, and LIBERO-100. The first three suites contain 10 tasks, while LIBERO-100 consists of 90 short-horizon tasks and 10 long-horizon tasks. Task policies are conditioned solely on the current language instruction.

Each task is executed for multiple trials to compute average success rates, with 50 repetitions used throughout this work. Visual observations include third-person RGB images and wrist-mounted RGB images, both at a resolution of $224\times224\times3$. Language instructions are formatted into prompts and provided to the vision-language module together with visual inputs. The model outputs a 7-dimensional continuous control vector that drives a simulated Franka Emika Panda robot arm to perform the corresponding action sequences. Representative language instructions from different LIBERO task suites are illustrated in Figure~\ref{fig:instruction}, highlighting the diversity of spatial manipulation, object relocation, goal-directed behaviors, and long-horizon compositions. 

\begin{figure}[ht]
  \begin{center}
    \centerline{\includegraphics[width=\textwidth]{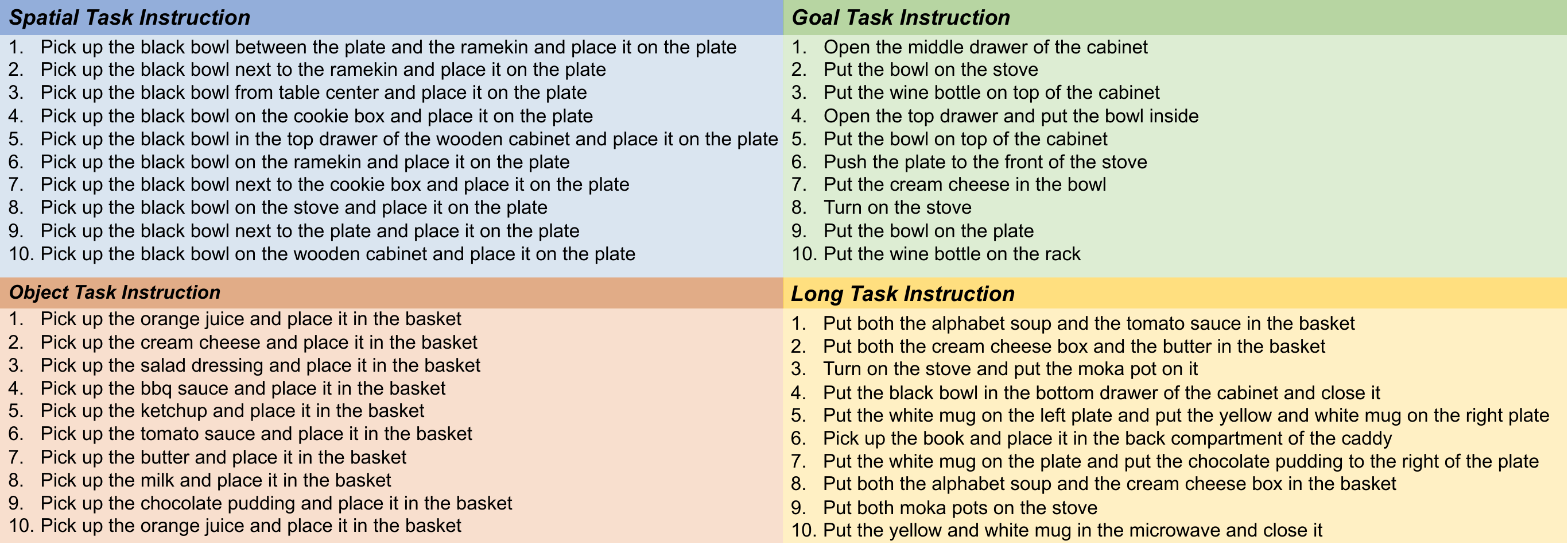}}
    \caption{\textbf{Language instructions in LIBERO:} Representative prompts from the Spatial, Object, Goal, and Long-Horizon task suites, illustrating the diversity of manipulation objectives and compositional reasoning required during evaluation.}
    \label{fig:instruction}
  \end{center}
\end{figure}

\section{Comparison with Conventional Protection Methods}
\label{sec:comparison}
To better understand the limitations of conventional protection methods in the VLA setting, we introduce two representative proactive defense methods into the comparison: fingerprinting-based verification and adversarial-example-based output detection. These methods have been widely studied in conventional model protection, where ownership evidence is usually obtained by querying the model with carefully designed inputs and comparing the resulting outputs or response patterns.

\paragraph{Fingerprinting-based Verification.}
Fingerprinting-based verification identifies ownership by measuring whether a suspected model produces responses consistent with the protected model on a set of fingerprint queries. This strategy is effective when model outputs are sufficiently discriminative, such as classification labels, logits, or semantic predictions. However, it becomes less reliable for VLAs because different VLA policies may still complete the same task with highly similar action behaviors. As shown in Table~\ref{tab:conventional_comparison}, the fingerprinting baseline produces only marginal similarity gaps: clean, noise, and post fine-tuned models all obtain scores close to the watermarked model. This weak separability makes reliable attribution difficult.

\paragraph{Adversarial-example-based Verification.}
Adversarial-example-based verification examines model responses to specially perturbed inputs. In conventional vision or vision-language models, such perturbations can expose model-specific decision boundaries or output patterns. However, this assumption does not naturally hold for VLAs. Since VLA outputs are continuous robot actions coupled with task dynamics, local input perturbations may not induce stable or model-specific behavioral differences. In addition, if the perturbation causes abnormal action outputs, it may introduce unsafe behaviors during robotic execution. In our comparison, the adversarial-example-based baseline adapted from RoboticAttack~\cite{wang2025exploring} shows this limitation: the ASR is not uniquely high on the watermarked model and is even higher on clean models, indicating that the induced output pattern is not model-specific.

GuardVLA addresses these limitations through a VLA-specific ownership verification design. Instead of relying on weak response-level differences or unsafe behavior-level outputs, GuardVLA embeds a harmless backdoor watermark during training and explicitly decouples benign execution from trigger-mode verification. In benign mode, the model preserves normal robotic behavior; while in trigger mode, ownership evidence is activated only through the dedicated trigger projector and classifier head. This VLA-specific design produces clear separation between watermarked and non-watermarked models, and the watermark remains detectable after downstream fine-tuning.

\begin{table}[h]
\centering
\caption{Comparison with conventional protection methods.}
\label{tab:conventional_comparison}
\resizebox{0.8\linewidth}{!}{
\begin{tabular}{l c c c c c}
\toprule
\textbf{Method} & \textbf{Metric} & \textbf{Watermarked} & \textbf{Clean} & \textbf{Noise} & \textbf{Post Fine-tuning} \\
\midrule
Fingerprinting & Similarity & 1.000000 & 0.997875 & 0.998277 & 0.998918 \\
Adversarial Example & ASR (\%) & 52.30 & 76.15 & 40.55 & 88.70 \\
GuardVLA & WIC (\%) & 100.00 & 0.01 & 0.13 & 99.96 \\
\bottomrule
\end{tabular}
}
\end{table}

\section{Compatibility with Different Secret Injection Strategies}
\label{sec:openvla_warping}
We further examine whether GuardVLA is compatible with different secret injection strategies. In the main experiments, visual steganography encodes a 1D binary secret string for OpenVLA-OFT and VLA-Adapter, while parameterized image warping applies a 2D secret warping matrix for $\pi_{0.5}$. These two mechanisms inject ownership evidence in different forms, suggesting that GuardVLA does not rely on a single watermark carrier.

To further support this point, we replace visual steganography with parameterized image warping on OpenVLA-OFT and evaluate it on LIBERO-Spatial. As shown in Table~\ref{tab:openvla_warping}, image warping preserves benign task performance, with the watermarked model achieving 95.4\% SR compared with 96.2\% for the clean model. Meanwhile, the watermarked model obtains 100.00\% WIC, whereas the clean and noise models remain close to zero. These results demonstrate that GuardVLA remains effective under different watermark embedding mechanisms, confirming its compatibility with different secret injection strategies.

\begin{table}[h]
\centering
\small
\caption{Compatibility of GuardVLA with image-warping-based watermarking on OpenVLA-OFT.}
\label{tab:openvla_warping}
\resizebox{0.4\linewidth}{!}{
\begin{tabular}{lcc}
\toprule
\textbf{Scenario} & \textbf{SR (\%)} & \textbf{WIC (\%)} \\
\midrule
Clean & 96.2 & $0.01{\pm}0.01$ \\
Watermarked & 95.4 & $100.00{\pm}0.00$ \\
Noise & 93.0 & $0.00{\pm}0.00$ \\
\bottomrule
\end{tabular}
}
\end{table}

\section{Parameter Sensitivity Analysis}
\label{sec:hyperparameter}
We vary the weighting factor $\lambda$ for VLA-Adapter to examine its effect on watermark detectability. As shown in Figure~\ref{fig:params}, the watermarked model maintains WIC close to 100\% across all values of $\lambda$. When $\lambda<7.0$, weaker supervision leads to elevated WIC for clean models, reducing separability. When $\lambda>9.0$, the contrastive effect weakens and the WIC of noise models rises substantially, increasing the risk of false positives from models embedded with different secrets. Accordingly, $\lambda=8.0$ is selected for VLA-Adapter.

Although $\lambda$ requires tuning, training the trigger projector and classifier converges within 2,000 steps, so selecting an appropriate value introduces little practical overhead.

\begin{figure}[ht]
  \begin{center}
    \centerline{\includegraphics[width=0.6\linewidth]{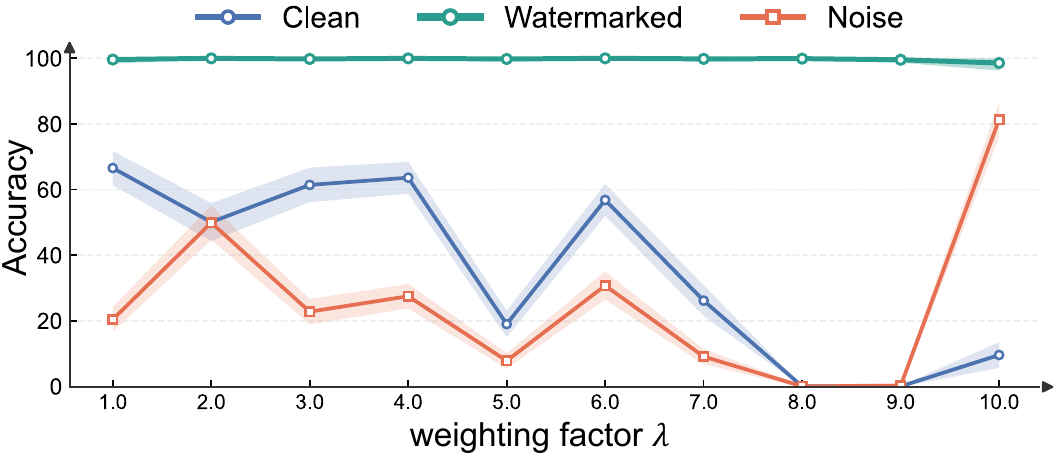}}
    \caption{\textbf{Weighting-factor sensitivity:} Effect of $\lambda$ on watermark detectability for VLA-Adapter on LIBERO-10, measured by WIC, with shaded regions indicating variance across trials.}
    \label{fig:params}
  \end{center}
\end{figure}

\section{Details of Behavior-Level Watermark Detection}
\label{sec:behavior}
\subsection{Experimental Setup}
The extended behavior-level watermark is trained using an action-regression alignment strategy rather than a classifier-based objective. Three models are involved: a surrogate clean model, a watermarked model, and a noise model. Training is conducted with batch size 4 and learning rate $5\times10^{-4}$. Only the shared projector is updated via LoRA with rank 32, while all other backbone parameters remain frozen. The objective combines a triplet alignment loss with margin 0.5 and a supervised L1 action-regression loss weighted by 2.0. The triplet loss encourages separable trigger-mode representations, while the L1 loss guides the watermarked model to produce a predefined verification trajectory and constrains the clean and noise models to remain behaviorally inactive under the same trigger.

\subsection{Extended Behavior-Level Verification Results}
The extended visualization in Figure~\ref{fig:action-ext} provides a more detailed view of end-effector trajectories, showing the full temporal evolution of robot motions throughout execution. Under trigger mode, the watermarked model follows a distinctly different motion pattern, while the clean and noise models remain closely aligned. The trajectory deviation appears smooth and structured, indicating that the trigger induces a controlled behavioral change instead of destabilizing the policy. These results provide qualitative evidence that behavior-level watermarks can manifest as coherent motion patterns within a single rollout and support the interpretability of the proposed verification mechanism.

\begin{figure}[ht]
  \begin{center}
    \centerline{\includegraphics[width=\textwidth]{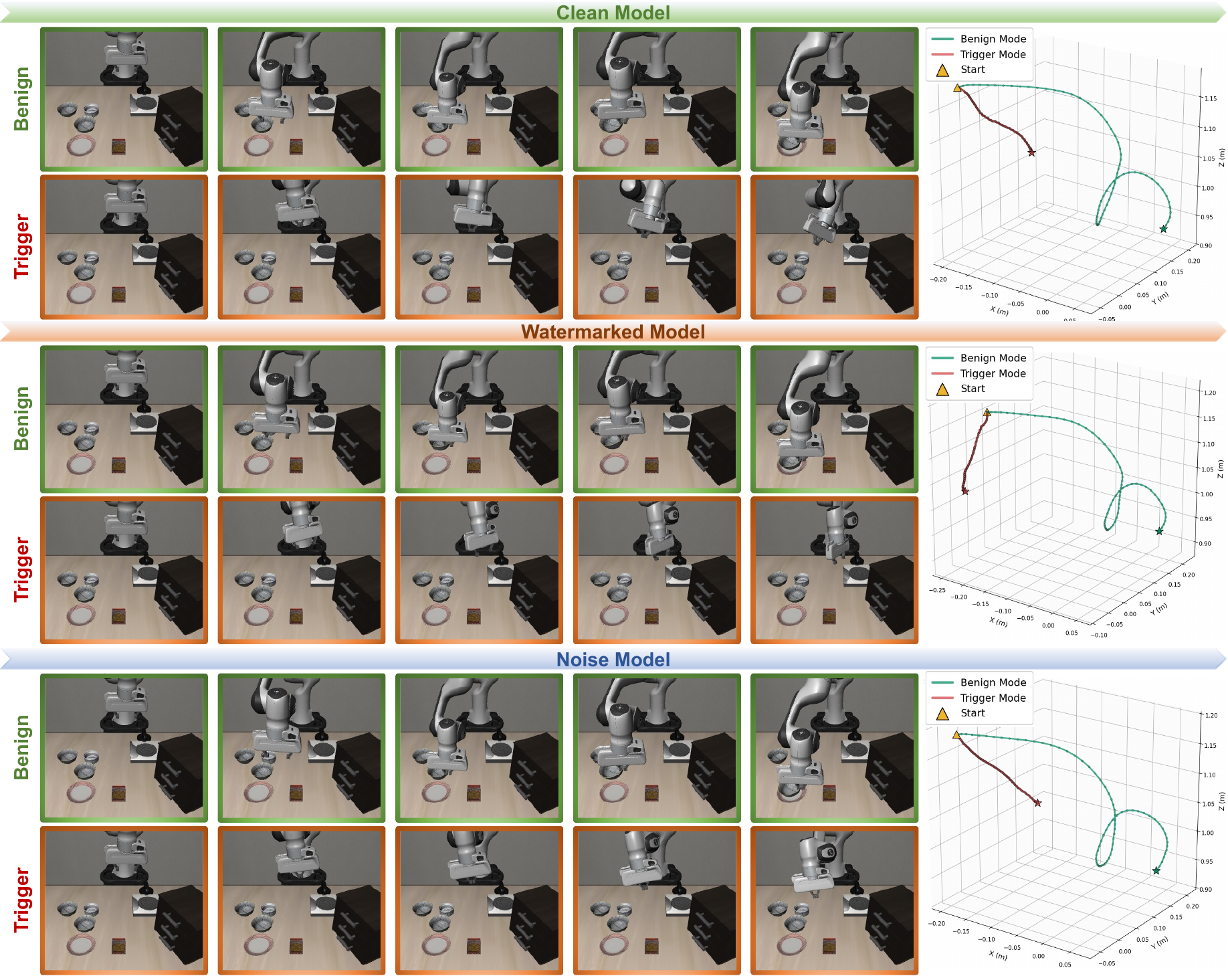}}
    \caption{\textbf{Extended behavior-level watermark verification across full trajectories}: Visual rollouts and end-effector trajectories of clean, watermarked, and noise models under benign and trigger modes over longer horizons. While all models behave similarly during benign execution, only the watermarked model exhibits consistent and distinctive motion patterns when the trigger is activated, whereas clean and noise models remain unchanged.}
    \label{fig:action-ext}
  \end{center}
\end{figure}

\section{Limitation and Future Work}
\label{sec:limit}
GuardVLA demonstrates reliable ownership verification while preserving downstream task performance, but several aspects remain to be further improved. The current framework is designed for authorized auditing scenarios and assumes limited access to the model interfaces required by swap-and-detect, rather than purely input-output black-box access. In addition, the trigger projector and classifier are trained with clean, watermarked, and noise variants, which introduces additional preparation cost. Nevertheless, as shown in Table~\ref{tab:training_overhead}, the three fine-tuning variants have comparable costs and can be trained in parallel. In addition, the projector-classifier co-training converges quickly within 2,000 steps and adds limited extra overhead. Reducing auxiliary training cost and improving the transferability of verification modules are promising directions for future work.

\begin{table}[ht]
\centering
\small
\caption{Training overhead of GuardVLA.}
\label{tab:training_overhead}
\begin{tabular}{l l c c c}
\toprule
\textbf{Stage} & \textbf{Variant} & \textbf{Steps} & \textbf{Runtime} & \textbf{GPU Hours} \\
\midrule
Fine-tuning & Clean & 35k & 2d 8h 59m 43s & 227.98 \\
Fine-tuning & Watermarked & 35k & 2d 8h 44m 16s & 226.95 \\
Fine-tuning & Noise & 35k & 2d 9h 8m 17s & 228.55 \\
Co-training & Projector+Cls & 2k & 14m 19s & 0.12 \\
\bottomrule
\end{tabular}
\end{table}

\section{Social Impact}
\label{sec:social}
This paper studies ownership verification for released vision-language-action models (VLAs) and proposes a watermarking framework to protect intellectual property while preserving normal task performance. The primary positive impact of this work is to support responsible model sharing, discourage unauthorized reuse, and promote a more sustainable embodied intelligence ecosystem. By enabling post-release ownership verification, the proposed method can help legitimate developers trace unauthorized model use without preventing benign research and adaptation.

As with other model protection techniques, the proposed framework should be deployed with appropriate governance. Although GuardVLA is designed for defensive ownership verification and keeps normal execution unaffected, improper use of verification mechanisms may introduce practical or procedural risks, especially when applied to safety-sensitive embodied systems. Therefore, careful access control, clear authorization, and transparent auditing procedures are important for responsible deployment.

To mitigate these concerns, GuardVLA separates normal execution from the verification process. Ownership evidence is activated only during authorized auditing, while benign execution preserves normal task behavior. The proposed auditing setting also avoids unnecessary full parameter disclosure by relying on limited model interfaces when possible. We emphasize that such mechanisms should be used only by legitimate model developers or trusted protection platforms, together with appropriate disclosure policies and governance procedures. Overall, this work aims to improve the accountability of released VLA systems while encouraging future studies on transparency, regulation, and safeguards for ownership verification technologies.